\newtheorem{theorem}{Theorem}[section]
\def\E{\mathbb{E}}
\newcommand{\algrule}[1][.2pt]{\par\vskip.5\baselineskip\hrule height #1\par\vskip.5\baselineskip}
\newcommand{\tabincell}[2]{\begin{tabular}{@{}#1@{}}#2\end{tabular}}
\title{Model Rectification via Unknown Unknowns \\Extraction from Deployment Samples}
\author{
Bruno Abrahao$^{1,2}$\thanks{Equal contribution and listed in alphabetical order. Correspondence to: \texttt{<abrahao@nyu.edu>} or \texttt{<zw1454@nyu.edu>}}\\
\And
Zheng Wang$^1$\footnotemark[\value{footnote}] \\
\And
Haider Ahmed$^1$ \\
\And
Yuchen Zhu$^1$ \\
}
\date{%
    $^1$New York University Shanghai\\%
    $^2$New York University\\[2ex]%
} 					
\begin{document}
\maketitle

\begin{abstract}
Model deficiency that results from incomplete training data is a form of structural blindness that leads to costly errors, oftentimes with high confidence. During the  training of classification tasks, underrepresented class-conditional distributions that a given hypothesis space can recognize results in a mismatch between the model and the target space. To mitigate the consequences of this discrepancy, we propose \emph{Random Test Sampling and Cross-Validation} (RTSCV) as a general algorithmic framework that aims to perform a post-training model rectification at deployment time in a supervised way. RTSCV extracts unknown unknowns (u.u.s), i.e., examples from the class-conditional distributions that a classifier is oblivious to, and works in combination with a diverse family of modern prediction models. RTSCV augments the training set with a sample of the test set (or deployment data) and uses this redefined class layout to discover u.u.s via cross-validation, without relying on active learning or budgeted queries to an oracle. We contribute a theoretical analysis that establishes performance guarantees based on the design bases of modern classifiers. Our experimental evaluation demonstrates RTSCV's effectiveness, using 7 benchmark tabular and computer vision datasets, by reducing a performance gap as large as 41\% from the respective pre-rectification models. Last we show that RTSCV consistently outperforms state-of-the-art approaches.
\end{abstract}

\section{Introduction}
Data quality constitutes a critical factor affecting the performance of prediction models. In particular, incomplete training data frequently results in structural mismatches between data-driven trained models and the respective target space in which they are supposed to be deployed, which makes most classifiers susceptible to systematic errors due to their limited ability to rectify a model post-training.

In scenarios of increasing dependence on  algorithmic decisions in high stake situations, deficient models result in costly (sometimes fatal) errors, unfairness, and other problems. For example, an autopilot system may fail to recognize peculiar traffic signs it has never encountered during training, leading to accidents. In the case of automated recruiting, data from industries dominated by a given gender may result in biased classifiers, likely to reject examples of the opposite gender due to the lack of enough successful observations that belong to that class. In addition, the unseen joint distribution of features and ``data-drift'' may contribute to high confidence errors. For instance, when training a classifier to distinguish between white dogs and black cats, when presented with a black dog at deployment time, the model may predict ``cat'' with high confidence~\citep{Lakkaraju2017}. For ``data-drift,'' the structure of the target space may change over time and deviate from the trained model. Take, for example, the anecdotal account in the beginning of the COVID-19 pandemic, where physicians attempted to identify what type of ``bacteria'' had been causing an unusual high number of ``pneumonia'' cases, overlooking the fact that there was a new type of agent, i.e., a novel virus affecting the respiratory system.

We focus on classification tasks where a trained model may be oblivious to some of the domain-specific class-conditional distributions that a set of hypotheses can recognize. Data examples from these ``invisible'' joint-distributions of features that a classifier is oblivious to, i.e., ``hidden classes'', form the \emph{unknown unknowns} (u.u.s), which cause a prediction model to make errors with high confidence. This definition encompasses other terms researchers use in different contexts. For instance, in the literature that addresses over-confident softmax predictions of neural networks, especially in computer vision, the term \emph{out-of-distribution} (OOD) samples refer to the same concept~\citep{2017OOD-baseline, ODIN-OOD, MD-OOD, energy-OOD}. In addition, researchers have name the problem of classification with u.u.s the \emph{Open Set Recognition} (OSR) problem~\citep{OpenSet, WSVM}, due to the contrast between the ``open" nature of discovering u.u.s and the traditional closed set scenario, where the training and test classes match. 

We contribute to the mitigation of the u.u.s problem by proposing \emph{Random Test Sampling and Cross-Validation} (RTSCV)\footnote{For reproducibility, our code will be made publicly available and we will replace this footnote with the GitHub link after the anonymous reviewing phase.}, a general algorithmic framework that aims to perform a post-training model rectification of a base classifier at deployment time in a supervised way. RTSCV aims to reduce the structural mismatch between a trained model and the target space by extracting u.u.s from samples of the target space. RTSCV augments the training set with a sample of the test set (or deployment data) and uses this redefined class layout to discover unknown unknowns via cross-validation. Our key insight is that by augmenting a training set that possesses $m$ classes with a dummy class, labeled $m+1$, whose examples come from a test set sample, cross-validation is likely to decouple examples that belong to known classes from $m+1$, due to the high variance and broad boundary of this dummy class. Conversely, u.u.s. coming from separable classes may share greater affinity to the dummy class, as they are expected to being poor fits to the known classes and because the decision boundary around the dummy class may have been established with the contribution of examples from the u.u.s in the test sample.

RTSCV bears two advantages compared to previous methods. First, RTSCV can work in combination with a diverse family of modern classifiers. The bulk of existing methods on identifying u.u.s focus on modifying specific classification methods, such as SVM, KNN, and DNNs, in such a way as to include a free parameter that can be learned at the deployment phase. This allows for the method to predict u.u.s as possible outputs~\citep{OpenSet,WSVM,Jnior2016NearestKNN,OpenMax}. However, unlike RTSCV, these methods do not generalize, as they are classifier-specific. We note that RTSCV can be easily paired with any trained classifier. Second, RTSCV relies solely on the use of a sample of the test data, which removes assumptions made by approaches like active learning, which are often challenging to operationalize in practice, such as the existence of budgeted queries to an oracle~\citep{contradict-the-machine,Lakkaraju2017,MachineTeaching}.

We contribute a theoretical analysis with performance guarantees based on the design bases of modern classifiers, including Maximum Likelihood Estimation, Bayes classifier, and Minimum Mahalanobis distance. Through an extensive experimental evaluation, we use 7 benchmark tabular and more challenging computer vision datasets like CIFAR-10, CIFAR-100, and SVHN with ResNet and DenseNet as base models. Our results suggest that RTSCV is a promising direction for post-training rectification of a base classifier by reducing a performance gap (Accuracy, F-measure, AUROC) as large as 41\%. Moreover, our results indicate that RTSCV consistently outperforms state-of-the-art approaches and baselines by a significant margin.

\subsection{Prediction Errors and Assumptions}
The conceptual idea of a ``class'' is often subjective, and different hypothesis spaces will separate the feature space into different class-conditional distributions. Here we employ a working definition of u.u.s classes via a geometric argument. That is, given a fixed hypothesis space, the u.u.s form separable clusters in the feature space that are distinguishable from the known structures in the target space. This definition is without loss of generality, as it allows for any abstraction of conceptual blindness to examples. 
We note that u.u.s are not the only sources of prediction errors. To delineate the aims of RTSCV, here we discuss different types of errors and the scope in which RTSCV operates. Let $f$ be a classifier and consider the hypothesis space produced by this model, i.e., the set of all functions that can be returned by it. We assume that the model is consistent, that is, if there is a function in the hypothesis space, the machine is going to produce that function from training. Further, let $E$ be the \emph{Bayes Error}, or the irreducible error. If $E(\mathcal{H})$ is the lowest error we could produce with hypothesis space $\mathcal{H}$, and $E(\mathcal{H}, D)$ is the minimum error we produce with $\mathcal{H}$ and available training data $D$, then $E(\mathcal{H}, D)-E$ represents the overall \emph{generalization error} given $\mathcal{H}$ and $D$, which can be decomposed as the sum of $E(\mathcal{H}, D)-E(\mathcal{H})$ and $E(\mathcal{H})-E$. We call the first difference \emph{estimation error}, and the second difference \emph{approximation error}. That is, the model may produce errors due to either deficiencies of the model (approximation error) or to the training data (estimation error), such as u.u.s. In this paper, we focus on the latter, i.e., reducing the estimation error under the assumption of a fixed hypothesis space. We also assume that the data are free of mislabeling errors.

We emphasize the distinction between u.u.s detection and outlier detection. Outliers are rare extreme values, produced by the realization of (possibly known) class-conditional distributions. As outliers tend to be isolated from any cluster in the feature space, we make a distinction with u.u.s, which are exemplary of clusters generated by some joint-distribution of their features, but whose structure is invisible to the trained model. As such, detecting outliers is beyond the scope of our work. In addition, related to our approach is zero-shot learning, where we assume that the test set also includes unseen data classes during training. It aims to discover u.u.s, by creating unseen joint-distributions of features that come from prescribed combinations of attributes among known classes. This method is not comparable to our approach, as we do not make use of such side information.

\subsection{Related Work}
Early work focused on extending classical machine learning algorithms to enable u.u.s prediction. Prominent examples are SVM-based methods, such as \citep{OpenSet}, which proposed the 1-vs-Set machine that separates the feature space with an additional hyperplane parallel to the hyperplane obtained from the SVM. It then optimizes the open space risk for this linear kernel slab model. To further reduce the open set risk, \citep{WSVM} proposed the W-SVM to incorporate non-linear kernels under a compact abating probability (CAP) model. Another similar approach is the $P_I$-SVM by \citep{PISVM}. Besides modifications to SVM, \citep{Jnior2016NearestKNN} introduced the OSR version of the Nearest Neighbor classifier (OSNN) based on a threshold method that relies on measurements of the distance of an u.u.s sample from the known space. 

To address large and high-dimensional datasets, recent approaches proposed to modify Deep Neural Networks (DNNs). A baseline was proposed by \citep{2017OOD-baseline}, formalizing the observation that the softmax predictions may assign high confidence to erroneously classified out-of-distribution samples (u.u.s). \citep{ODIN-OOD} designed the ODIN detector that could better differentiate the confidence scores between in-distribution and out-of-distribution samples in the target space, by combining temperature scaling and input perturbation. Using probabilistic modeling, specifically, Gaussian discriminant analysis (GDA), \citep{MD-OOD} modeled the softmax outputs of known classes as class-conditional Gaussian distributions. It then used the closest Mahalanobis distance (MD) to these Gaussian distributions of each test sample as the confidence score. As a modification to this approach, \citep{LH-OOD} replaced the MD confidence score with the class-conditional log-likelihood. For DNNs, researchers focused on changing the network architecture to adapt to u.u.s detection, such as OpenMax~\citep{OpenMax}, CROSR~\citep{CROSR}, and C2AE~\citep{C2AE}. The surveys by \citep{Geng2020Survey} and \citep{2019Survey} provide a comprehensive review of these methods. For all of the preceding approaches, we argue that the modification of existing classifiers is model-specific. In contrast, RTSCV is a general algorithmic framework capable of working with any classifier.

A different line of research addressed the u.u.s in an incremental or active learning manner. \citep{EVM} formulated a theoretically sound classifier, the Extreme Value Machine (EVM), grounded on the Extreme Value Theory, which is able to perform nonlinear kernel-free variable bandwidth incremental learning. \citep{contradict-the-machine} and \citep{Lakkaraju2017} both proposed a hybrid framework combining human crowdsourcing and algorithmic methods, in which some priors of the u.u.s are extracted by experts whose feedbacks guide adjustments of the trained model. By adopting an active learning environment, these approaches can potentially cope with a dynamic feature space. Nevertheless, requiring the presence of an oracle is oftentimes unrealistic, as it may be time and human-labor expensive, and therefore non-scalable for many real-world applications. Our proposal offsets these shortcomings by relying only on the analysis of a data sample at deployment time.

The idea of using cross-validation to rectify incorrect data was previously experimented to identify mislabeled training data \citep{Brodley1999}. In their noise-reduction approach, cross-validation is performed over the training set and mislabeled data are those given different ``pseudo-labels" from their original labels after the cross-validating phase. While their work focuses on \textit{identifying} mislabeled data, RTSCV aims to augment a model with new labels to the data examples that class-conditional distributions that were not contemplated at training time generates, which allows for the correct classification of these examples.

\begin{algorithm}[h!]
\caption{Random Test Sampling and Cross-Validation}
\label{alg:RTSCV}
\begin{algorithmic}
\STATE {\bf Input: }Training set $X$ with labels $\{1,2,\ldots,m\}$, test set $Y$, sample rate c, base classifier $f$, number of cross-validation folds $k$
\algrule
\STATE 1. Randomly sample test set $Y$ to obtain a subset $X_s$ such that $|X_s| = c \cdot |Y|$
\STATE 2. Assign label $m+1$ to $X_s$
\STATE 3. Obtain an augmented training set $\widetilde{X} \leftarrow X \cup X_{s}$ with labels $\{1,2,\ldots,m+1\}$
\STATE 4. Run a $k$-fold cross-validation on $\widetilde{X}$
\STATE 5. Let $X_u$ be the set of samples with predicted label $m+1$ during cross-validation
\STATE 6. Label samples in $X_u$ with label $m+1$
\STATE 7. Obtain the rectified training set $\overline{X} \leftarrow X \cup X_u$ with labels $\{1,2,\ldots,m+1\}$
\STATE 8. Train $\widehat{f}$ on $\overline{X}$
\STATE 9. \textbf{return} $\widehat{f}$
\end{algorithmic}
\end{algorithm}

\section{The RTSCV Framework}
\label{sec_rtscv}
We set our scope on multi-class classification tasks. Let $f$ be the input base classifier, which we will treat as a black-box. Let $X=\{X_1,X_2, \cdots,X_m\}$ be the training set with labels $\{1,2,\ldots,m\}$. Let $Y$ be the test set (as a representative of the target space).
To detect the u.u.s and rectify a trained model, we present RTSCV in Algorithm \ref{alg:RTSCV}. In summary, we first randomly sample the test set $Y$ of cardinality $|c\cdot Y|$, for a given fraction $c$, to obtain a sample $X_s$, from which we create a new dummy class and assign the label $m+1$. Note that $X_s$ may contain examples from both the known and (potentially multiple) u.u.s classes. We then augment the original training set with examples from $X_s$, resulting in a new intermediate training set $\widetilde{X}$, to which we apply cross-validation in combination with a base classifier $f$.

Intuitively, RTSCV relies on the correct re-classification of samples in $X_s$ during cross-validation, regarding whether they belong to a known or u.u.s class. Samples from the test set make up a high variance class $X_s$, whose boundary encompasses all other classes (i.e., it may contain examples from any class). In light of this, examples that belong to known classes are likely placed in the correct classes due to the conciseness and specificity of the representation. On the contrary, u.u.s are classified as members of $m+1$ due to the dissimilarity with all other classes and to affinity with some examples that contributed to the position of the decision boundary around class $m+1$. The examples classified during cross-validation with label $m+1$ make up a new set $X_u$, (the u.u.s class), which we adjoin to the original training set $X$ to form the rectified model $\overline{X}$.

The sample rate $c$ is a critical hyperparameter. For $c$, a very small sample rate may not contain enough representatives of u.u.s due to a small test set sample, whereas a large $c$ may lead to a sample class $X_s$ that over-represents the structure of the known classes, thereby causing the cross-validation to assign examples of known classes to $X_u$. In Figure~\ref{fig:sample_rate} we illustrate the classifier's performance versus the sample-training ratio, i.e., $c$ as a function of the training set size, for two of the benchmark datasets we used in our experimental evaluation. We discuss the search for the optimal $c$ in Section~\ref{sec_experiment}.

The number of cross-validation folds $k$ determines the running time of RTSCV, whose time complexity is roughly $O(k\cdot T_f)$, where $T_f$ is the running time of the input base classifier $f$ without RTSCV.
Figure~\ref{fig:ablation} (a) displays the model performance against $k$ for several datasets we used in the evaluation of RTSCV. Note that RTSCV effectively rectifies a trained model even if we use the more computationally economical ``holdout validation" approach.

\begin{figure}[t]
\centering
\includegraphics[width=0.49\textwidth]{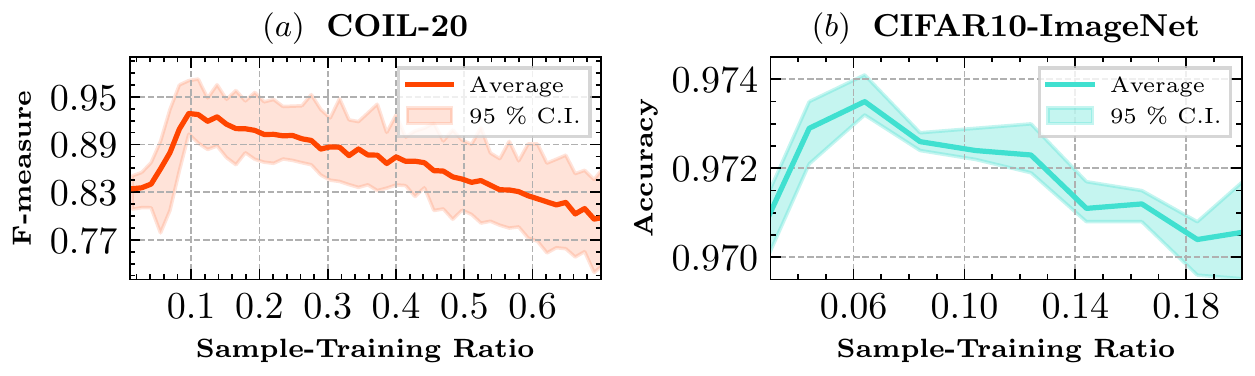}
\caption{The influence of sample-training ratio on RTSCV's performance. Multiple results were recorded to calculate the 95\% confidence interval. Plot (a) corresponds to openness 9.3\%. For (b), the accuracy on the y-axis stands for the combined, overall classification accuracy.}
\label{fig:sample_rate}
\end{figure}

\begin{figure}[t]
\centering
\includegraphics[width=0.98\textwidth]{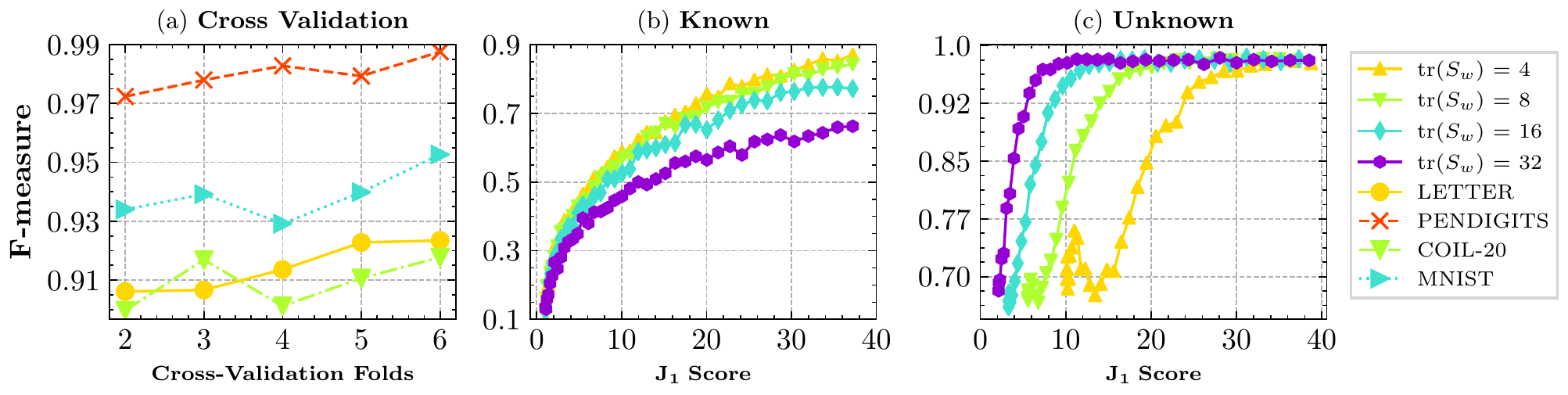}
\caption{(a) The influence of the number of cross-validation folds on RTSCV framework, evaluated on tabular datasets. (b) The influence of known class separability on RTSCV framework. We vary the between-class distances and the covariances of the known classes to generate different $J1$ scores.
(c) The influence of u.u.s class separability on RTSCV framework. We vary the covariance and the distance of the u.u.s class to the known classes.}
\label{fig:ablation}
\end{figure}

\section{Probabilistic Analysis} 
\label{sec_performance_guarantees}
We contribute a theoretical analysis that justifies the correctness of RTSCV and establishes performance guarantees as a function of class separability and the test sample size. As RTSCV is a general framework that may be combined with any base classifier, each employing disparate approaches to establish decision boundaries, there are major challenges in establishing a concise set of mathematical tools that would cover the basis of each specific approach. In light of this, we analyze its behavior through the lens of objectives that modern classifiers aim to optimize to find the sufficient conditions for the correct relabeling of the test set sample $X_s$, namely Maximum Likelihood Estimation (MLE), Bayes classifier (BC), and Minimum Mahalanobis Distance (MMD). These objectives are shared by many classification models, such as the rule-based, margin-based, etc.
 
We structure the following theorems based on two classification cases under RTSCV of a data point $x \in X_s$: (1) the true label of $x$ is one of the known classes, where the correct decision is to assign the true known label during cross-validation, and (2) $x$ belongs to the u.u.s class and the process should keep it in $X_s$. We aim to establish the correctness of RTSCV by showing that this correct decision is the one that optimizes the MLE, the BC, and the MMD. 

We model all the known and u.u.s classes as non-identical multivariate Gaussian distributions. Specifically, let $X_1,X_2,\ldots, X_m$ be the known classes with distinct mean $\mu_i \in \mathbb{R}^d$ and diagonal covariance matrix $\Sigma_i \in \mathbb{R}^{d \times d}$, and $X_u$ is the u.u.s class with mean $\mu_u \in R^d$ and diagonal covariance matrix $\Sigma_u \in \mathbb{R}^{d \times d}$. Furthermore, we assume all the distributions are homoscedastic, so $\Sigma_i = \sigma_i^2 I$ for $i \in \{1,2,\ldots,m\}$ and $\Sigma_u = \sigma_u^2 I$, for some $\sigma_i \in \mathbb{R}^+, \sigma_u \in \mathbb{R}^+$. In this way, since the sample class $X_s$ is obtained by randomly sampling the test set by RTSCV, we can model $X_s$ as a Gaussian mixture of $X_1,X_2,\ldots,X_m,X_u$ weighted by their respective percentage in the test set, denoted $P(X_i), i\in \{1,2,\ldots, m,u\}$. 

Under the preceding assumptions, via Gaussian discriminant analysis, we first focus on MLE to explore how the total likelihood of the dataset changes under different labeling scheme of the sample class $X_s$. For a test set sample $x\in X_s$, its likelihood of being in one of the $X_i$ $(i \in \{1,2,\ldots,m,u\})$ is:
\begin{equation}
L_i(x) = \frac{1}{(2\pi)^{\frac{d}{2}}\sqrt{|\Sigma_i|}}\exp(-\frac{1}{2}(x-\mu_i)^T \Sigma_i^{-1}(x-\mu_i))
\end{equation}
where $|\Sigma_i|$ denotes the determinant of $\Sigma_i$. Its likelihood of being in the sample class $X_s$ is
\begin{equation}
L_s(x)=P(X_u)L_u(x) + \sum_{k=1}^m P(X_k) L_k(x)
\end{equation}
as $X_s$ is a Gaussian mixture. We can now find sufficient conditions under which the correct classification of $x$ can increase the total likelihood.

\begin{theorem}[Maximum Likelihood Estimation]
\label{thm:MLE}
For $x \in X_s$, if $x$ is a sample of some known class, i.e., $x \sim X_k$ for some $k \in \{1,2,\ldots,m\}$, for $x$ to be correctly classified as belonging to $X_k$ based on MLE, we require it to have a higher class-conditional likelihood for class $X_k$ than that of $X_s$. And we have
$$\mathbb{E}_{x\sim X_k}(L_k(x)) \geq \mathbb{E}_{x\sim X_k}(L_s(x))$$
given that, for all $i \in \{1,\cdots,m,u\}$:
$$\|\mu_{k}-\mu_{i}\|_2^{2} \geq d\cdot (\sigma_{k}^{2}+\sigma_{i}^{2}) \cdot \ln \bigg(\frac{2\sigma_{k}^{2}}{\sigma_{k}^{2}+\sigma_{i}^{2}}\bigg)$$
Similarly, if $x \sim X_u$, then we have
$$\mathbb{E}_{x\sim X_u}(L_s(x)) \geq \mathbb{E}_{x\sim X_u}(L_k(x))$$
given that
$$\frac{\|\mu_{k}-\mu_{u}\|_2^{2}}{\sigma_{k}^{2}+\sigma_{u}^{2}} \ge 2\ln \bigg(\frac{1-P(X_k)}{P(X_u)}\bigg)+ d \cdot \ln \bigg(\frac{2\sigma_{u}^{2}}{\sigma_{k}^{2}+\sigma_{u}^{2}}\bigg)$$
\end{theorem}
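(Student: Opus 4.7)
The proof plan is to reduce both inequalities to closed-form evaluations of the expected class-conditional density, which for isotropic Gaussians admits a clean expression via the Gaussian convolution identity. The key computational lemma is that for $x \sim \mathcal{N}(\mu_k, \sigma_k^2 I)$ and density $L_i$ of $\mathcal{N}(\mu_i, \sigma_i^2 I)$, the integral $\int L_k(x) L_i(x)\, dx$ is itself a Gaussian density in the variable $\mu_k - \mu_i$ with variance $(\sigma_k^2+\sigma_i^2) I$. Hence
\begin{equation}
\mathbb{E}_{x\sim X_k}[L_i(x)] = \frac{1}{\bigl(2\pi(\sigma_k^2+\sigma_i^2)\bigr)^{d/2}} \exp\!\bigg(-\frac{\|\mu_k-\mu_i\|_2^2}{2(\sigma_k^2+\sigma_i^2)}\bigg),
\end{equation}
and in particular $\mathbb{E}_{x\sim X_k}[L_k(x)] = (4\pi\sigma_k^2)^{-d/2}$. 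I would establish this identity (either by completing the square in the exponent or by citing the standard product-of-Gaussians formula) as the first step, because every remaining inequality follows by algebraic manipulation of this closed form.

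For the first claim ($x \sim X_k$), I would exploit linearity: since $\mathbb{E}[L_s(x)] = P(X_u)\mathbb{E}[L_u(x)] + \sum_{j=1}^m P(X_j)\mathbb{E}[L_j(x)]$ is a convex combination, it suffices to show $\mathbb{E}[L_k(x)] \geq \mathbb{E}[L_i(x)]$ for every $i \in \{1,\dots,m,u\}$. Substituting the closed form, this becomes
\begin{equation}
(4\pi\sigma_k^2)^{-d/2} \;\geq\; \bigl(2\pi(\sigma_k^2+\sigma_i^2)\bigr)^{-d/2}\exp\!\bigg(-\frac{\|\mu_k-\mu_i\|_2^2}{2(\sigma_k^2+\sigma_i^2)}\bigg),
\end{equation}
and taking logarithms and rearranging yields exactly the stated separation condition $\|\mu_k-\mu_i\|_2^2 \geq d(\sigma_k^2+\sigma_i^2)\ln\!\bigl(2\sigma_k^2/(\sigma_k^2+\sigma_i^2)\bigr)$.

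For the second claim ($x \sim X_u$), I cannot afford to compare $\mathbb{E}[L_s]$ to $\mathbb{E}[L_k]$ term-by-term because the true-class term $\mathbb{E}[L_u]$ appears inside $L_s$ weighted only by $P(X_u)$. The trick is to lower-bound $\mathbb{E}[L_s(x)]$ by retaining just the two relevant terms: $\mathbb{E}[L_s(x)] \geq P(X_u)\mathbb{E}[L_u(x)] + P(X_k)\mathbb{E}[L_k(x)]$, so that it is enough to prove $P(X_u)\mathbb{E}[L_u(x)] \geq (1-P(X_k))\mathbb{E}[L_k(x)]$. Plugging in the closed form and taking logarithms produces the second condition after a short calculation, with the extra prior-ratio term $2\ln\bigl((1-P(X_k))/P(X_u)\bigr)$ arising precisely from absorbing the weights $P(X_u)$ and $1-P(X_k)$ into the inequality.

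The only genuinely delicate point is the choice of lower bound in the u.u.s case: dropping the $P(X_k)\mathbb{E}[L_k(x)]$ term would yield a weaker condition than necessary, while keeping the full sum would introduce the cross terms $\mathbb{E}_{x\sim X_u}[L_j(x)]$ for $j \neq k,u$ whose magnitudes depend on separations not named in the hypothesis. The rest is routine Gaussian algebra; the main obstacle is simply verifying that the bookkeeping of normalization constants and variance sums produces exactly the stated thresholds.
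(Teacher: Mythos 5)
Your proposal is correct and follows essentially the same route as the paper's proof: the closed-form Gaussian convolution identity $\mathbb{E}_{x\sim X_k}[L_i(x)] = \mathcal{N}(\mu_i;\mu_k,(\sigma_k^2+\sigma_i^2)I)$, the convex-combination reduction to pairwise comparisons for the known-class case, and the two-term lower bound on $\mathbb{E}_{x\sim X_u}[L_s(x)]$ leading to $P(X_u)\mathbb{E}[L_u] \geq (1-P(X_k))\mathbb{E}[L_k]$ for the u.u.s case. The bookkeeping you describe does indeed produce exactly the stated thresholds.
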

\begin{proof}
All proofs are in the supplementary materials.
\end{proof}

Theorem \ref{thm:MLE} characterizes sufficient conditions for the re-classification of $X_s$ to maximize total likelihood. In summary, it says that $X_s$ will be correctly classified based on MLE once the class separability is above a threshold characterized by 
the squared difference of the means and the squared within-class variances.

We now turn our attention to the Bayes classifier, which determines the membership of $x \in X_s$ by considering its posterior probability \citep{naiveBayes}. By Bayes' theorem, the posterior for $x$ to be in class $X_i$ is given by $p(X_i|x)=L_i(x)P'(X_i)/p(x)$. In scenario we consider, $P'(X_i)$ stands for the prior of $X_i$ in the training set:
$P^{\prime}(X_i) = |X_i|/{|X_s \cup \bigcup_{k = 1}^{m}X_k|}, i \in \{1,2,\ldots,m,s\}$. In this way, one can calculate the Bayesian decision boundary of $x$ between $X_s$ and $X_k$ for some $k \in \{1,2,\ldots,m\}$.

\begin{theorem}[Bayesian Classification]
\label{thm:Bayesian}
For $x \in X_s$, if $x \sim X_k$, we have:
$$\mathbb{E}_{x\sim X_k}(p(X_k|x)) \geq \mathbb{E}_{x\sim X_k}(p(X_s|x))$$
given that, for all $i \in \{1,\cdots,m,u\}$:
$$\frac{\|\mu_{k}-\mu_{i}\|_2^{2}}{\sigma_{k}^{2}+\sigma_{i}^{2}} \geq 2\ln \bigg(\frac{P^{\prime}(X_s)}{P^{\prime}(X_k)}\bigg)+ d \cdot \ln \bigg(\frac{2\sigma_{k}^{2}}{\sigma_{k}^{2}+\sigma_{i}^{2}}\bigg)$$
If $x \sim X_u$, then we have
$$\mathbb{E}_{x\sim X_u}(p(X_{s}|x)) \geq \mathbb{E}_{x\sim X_u}(p(X_{k}|x))$$
given that 
\begin{align*}
\frac{\|\mu_{k}-\mu_{u}\|_2^{2}}{\sigma_{k}^{2}+\sigma_{u}^{2}} &\geq 2\ln \bigg(\frac{P^{\prime}(X_k)- P^{\prime}(X_s)P(X_k)}{P^{\prime}(X_s)P(X_u)}\bigg) + d \cdot \ln \bigg(\frac{2\sigma_{u}^{2}}{\sigma_{k}^{2}+\sigma_{u}^{2}}\bigg)
\end{align*}
\end{theorem}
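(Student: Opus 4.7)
The plan is to mimic the strategy already set up for Theorem~\ref{thm:MLE}, but insert the prior weights $P'(X_i)$ carried in by Bayes' rule. Since $p(x)$ in the denominator of the posterior is the same for every class, the inequality $\mathbb{E}(p(X_k\mid x)) \geq \mathbb{E}(p(X_s\mid x))$ reduces to $P'(X_k)\,\mathbb{E}(L_k(x)) \geq P'(X_s)\,\mathbb{E}(L_s(x))$, and analogously for the u.u.s case. So the whole argument boils down to controlling expectations of Gaussian densities under other Gaussians.

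The computational engine is the standard product-of-Gaussians identity: for $x \sim \mathcal{N}(\mu_a,\sigma_a^2 I)$ in $\mathbb{R}^d$,
\begin{equation*}
\mathbb{E}_{x\sim X_a}[L_b(x)] \;=\; \int L_a(x)\,L_b(x)\,dx \;=\; \frac{1}{(2\pi)^{d/2}(\sigma_a^2+\sigma_b^2)^{d/2}} \exp\!\Bigl(-\tfrac{\|\mu_a-\mu_b\|_2^2}{2(\sigma_a^2+\sigma_b^2)}\Bigr),
\end{equation*}
so that $\mathbb{E}_{x\sim X_a}[L_a(x)] = (4\pi\sigma_a^2)^{-d/2}$. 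Taking ratios gives
\begin{equation*}
\frac{\mathbb{E}_{x\sim X_a}[L_a(x)]}{\mathbb{E}_{x\sim X_a}[L_b(x)]} \;=\; \exp\!\Bigl(\tfrac{d}{2}\ln\tfrac{\sigma_a^2+\sigma_b^2}{2\sigma_a^2} + \tfrac{\|\mu_a-\mu_b\|_2^2}{2(\sigma_a^2+\sigma_b^2)}\Bigr),
\end{equation*}
which is the only non-elementary identity the proof needs; I would import it directly from the computations used in Theorem~\ref{thm:MLE}.

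For the first claim (Case 1, $x\sim X_k$), I would expand $L_s = P(X_u) L_u + \sum_j P(X_j) L_j$ and observe that it suffices to verify $P'(X_k)\,\mathbb{E}[L_k(x)] \geq P'(X_s)\,\mathbb{E}[L_i(x)]$ for every component $i\in\{1,\ldots,m,u\}$, because then weighting by the mixture probabilities $P(X_i)$ (which sum to $1$) gives the full inequality. Substituting the ratio identity and taking logs turns this into exactly the stated bound on $\|\mu_k-\mu_i\|_2^2/(\sigma_k^2+\sigma_i^2)$, with the prior ratio $P'(X_s)/P'(X_k)$ now appearing where the MLE version had just the variance term.

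The second claim (Case 2, $x\sim X_u$) is the step I expect to be the main obstacle, because it is where the prior couples nontrivially to the mixture weights. Here the quantity $P'(X_s)\,\mathbb{E}[L_s(x)]$ contains the summand $P'(X_s)P(X_k)\,\mathbb{E}[L_k(x)]$, which also appears (essentially) on the right-hand side via $P'(X_k)\,\mathbb{E}[L_k(x)]$. I would move this overlapping term to the right, discard the nonnegative contributions from the known classes $j\neq k$, and keep only $P'(X_s)P(X_u)\,\mathbb{E}_{x\sim X_u}[L_u(x)]$ on the left. The resulting sufficient inequality
\begin{equation*}
\frac{\mathbb{E}_{x\sim X_u}[L_u(x)]}{\mathbb{E}_{x\sim X_u}[L_k(x)]} \;\geq\; \frac{P'(X_k)-P'(X_s)P(X_k)}{P'(X_s)P(X_u)}
\end{equation*}
is positive on the right exactly because $P'(X_k)\geq P'(X_s)P(X_k)$ (the training-set prior of $X_k$ dominates its mass within $X_s$), so taking logs and applying the ratio identity with $a=u$, $b=k$ yields the claimed separability condition. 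A brief sanity check that the bound tightens as $P(X_u)$ grows, and loosens when the mixture almost contains no u.u.s, would round out the argument.
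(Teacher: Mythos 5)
Your proposal is correct and follows essentially the same route as the paper's own proof: reduce the posterior comparison to $P'(X_k)\,\mathbb{E}[L_k]\ \text{vs.}\ P'(X_s)\,\mathbb{E}[L_s]$ via Bayes' theorem, evaluate the cross-expectations with the Gaussian product identity from Theorem~\ref{thm:MLE}, handle the mixture componentwise in Case~1, and in Case~2 isolate the $P'(X_s)P(X_u)\,\mathbb{E}[L_u]$ term against $\bigl(P'(X_k)-P'(X_s)P(X_k)\bigr)\mathbb{E}[L_k]$ after discarding the remaining nonnegative mixture components. Your added remark on the positivity of $P'(X_k)-P'(X_s)P(X_k)$ is a small sanity check the paper omits, but it does not change the argument.
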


Theorem~\ref{thm:Bayesian} (similarly to Theorem~\ref{thm:MLE}) establishes a performance guarantee on the correct behavior of the framework. That is, when the class separability between the u.u.s class and the known classes is greater than a given threshold,
controlled by the sample size, the priors $P'(X_s)$, and the dimensionality of the data, the re-classification of $X_s$ will be as we expect with high probability.

Last, the Minimum Mahalanobis Distance (MMD) mimics the goal shared by classifiers to place a data point in the class whose joint-probability distribution of features is the closest to that of the data point. We show that the correct decision by RTSCV is the one that minimizes the MD.

For a test set sample $x \in X_s$, its squared MD to some class $X_i$ is given by
\begin{equation}
D_M^{2}(x, X_i)=(x-\mu_i)^T \Sigma_i^{-1} (x-\mu_i)
\end{equation}

The analogous behavior of a classifier that employs MMD as a metric would assign $x$ to the class that has the closest MD to $x$\footnote{Note that we assume $\Sigma_s$ to be diagonal to avoid long derivations resulted from the Gaussian mixture model.}.

\begin{theorem}[Minimum Mahalanobis Distance]
\label{thm:MD}
Assume $\Sigma_s = \sigma_{s}^{2}I$. For $x \in X_s$, if $x \sim X_k$ for some $k \in \{1,2,\ldots,m\}$, we have:
$$\mathbb{E}_{x\sim X_k}(D_M^{2}(x, X_k)) \leq \mathbb{E}_{x\sim X_k}(D_M^{2}(x, X_s))$$
given that
$$\|\mu_{k} - \mu_{s}\|_2^{2} \geq d \cdot \sigma_{s}^{2} \cdot \Big(1 - \frac{\sigma_{k}^{2}}{\sigma_{s}^{2}}\Big)$$
If $x \sim X_u$, then we have
$$\mathbb{E}_{x\sim X_u}(D_M^{2}(\boldsymbol{x}, X_s)) \leq \mathbb{E}_{x\sim X_u}(D_M^{2}(\boldsymbol{x}, X_k))$$
given that
$$\frac{\|\mu_{u}-\mu_{k}\|_2^{2}}{\sigma_{k}^{2}} - \frac{\|\mu_{u}-\mu_{s}\|_2^{2}}{\sigma_{s}^{2}} \geq d \cdot \Big(\frac{\sigma_{u}^{2}}{\sigma_{s}^{2}} - \frac{\sigma_{u}^{2}}{\sigma_{k}^{2}}\Big)$$
\end{theorem}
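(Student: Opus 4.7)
The plan is to exploit the isotropic assumption so that every Mahalanobis expression collapses to a scaled squared Euclidean norm, and then evaluate the expectations using the bias--variance identity for multivariate Gaussians. Concretely, because $\Sigma_i = \sigma_i^2 I$ (and by the stated assumption $\Sigma_s = \sigma_s^2 I$), we can write $D_M^2(x, X_i) = \sigma_i^{-2}\|x-\mu_i\|_2^2$ for each class label $i \in \{1,\dots,m,s,u\}$. The whole argument then reduces to computing, for a Gaussian $x \sim \mathcal{N}(\mu, \sigma^2 I)$ and any fixed vector $v \in \mathbb{R}^d$, the expectation
\begin{equation*}
\mathbb{E}\bigl[\|x - v\|_2^2\bigr] \;=\; \mathbb{E}\bigl[\|x - \mu\|_2^2\bigr] + \|\mu - v\|_2^2 \;=\; d\sigma^2 + \|\mu - v\|_2^2,
\end{equation*}
where the cross term vanishes because $\mathbb{E}[x-\mu]=0$ and $\mathbb{E}[\|x-\mu\|_2^2] = \mathrm{tr}(\sigma^2 I) = d\sigma^2$.

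For the first inequality, I would set $x \sim X_k$ and apply the identity twice: once with $v = \mu_k$ (giving $\mathbb{E}_{x\sim X_k}[\|x-\mu_k\|_2^2] = d\sigma_k^2$, so $\mathbb{E}[D_M^2(x, X_k)] = d$) and once with $v = \mu_s$ (giving $\mathbb{E}_{x\sim X_k}[\|x-\mu_s\|_2^2] = d\sigma_k^2 + \|\mu_k - \mu_s\|_2^2$, so $\mathbb{E}[D_M^2(x, X_s)] = (d\sigma_k^2 + \|\mu_k-\mu_s\|_2^2)/\sigma_s^2$). Imposing $d \le (d\sigma_k^2 + \|\mu_k-\mu_s\|_2^2)/\sigma_s^2$ and clearing the denominator rearranges directly to the stated threshold $\|\mu_k-\mu_s\|_2^2 \ge d\sigma_s^2(1 - \sigma_k^2/\sigma_s^2)$.

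For the second inequality I would repeat the same computation with $x \sim X_u$, so that $\mathbb{E}_{x \sim X_u}[\|x-v\|_2^2] = d\sigma_u^2 + \|\mu_u - v\|_2^2$ for $v \in \{\mu_k, \mu_s\}$. Substituting into the two Mahalanobis expectations and demanding $\mathbb{E}[D_M^2(x,X_s)] \le \mathbb{E}[D_M^2(x,X_k)]$ yields
\begin{equation*}
\frac{d\sigma_u^2 + \|\mu_u-\mu_s\|_2^2}{\sigma_s^2} \;\le\; \frac{d\sigma_u^2 + \|\mu_u-\mu_k\|_2^2}{\sigma_k^2},
\end{equation*}
which after separating the norm terms from the $d\sigma_u^2$ terms collapses to the claimed bound $\|\mu_u-\mu_k\|_2^2/\sigma_k^2 - \|\mu_u-\mu_s\|_2^2/\sigma_s^2 \ge d(\sigma_u^2/\sigma_s^2 - \sigma_u^2/\sigma_k^2)$.

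There is no real technical obstacle here; the only subtlety worth flagging is that $X_s$ is genuinely a Gaussian mixture, so treating $\Sigma_s = \sigma_s^2 I$ is an extra modeling choice (which the theorem invokes explicitly) that allows the clean Euclidean reduction. I would note this assumption up front in the proof and then proceed with the two short computations above; each case fits in just a few lines once the bias--variance identity is in hand.
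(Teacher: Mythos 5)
Your proposal is correct and follows essentially the same route as the paper: both compute $\mathbb{E}[D_M^2(x,X_k)]=d$ and $\mathbb{E}[D_M^2(x,X_\cdot)]$ in closed form using Gaussian second-moment identities and then rearrange the resulting inequality into the stated separability thresholds. The only cosmetic difference is that the paper first writes $D_M^2(x,X_s)$ via an eigendecomposition of $\Sigma_s$ before specializing to $\Sigma_s=\sigma_s^2 I$, whereas you invoke the isotropy up front and apply the bias--variance identity directly, which is a slightly cleaner path to the identical formulas.
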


In alignment with Theorem~\ref{thm:MLE} and Theorem~\ref{thm:Bayesian}, here we establish a requirement of class separability as the sufficient condition for the expected behavior of RTSCV.

\subsection{Empirical Illustration} 
To illustrate the preceding theoretical analysis, here we empirically study the effect of class separability using a synthetic dataset that consists of 10 distinct known classes and one u.u.s class, all sampled from pre-fixed 2-dimensional Gaussian distributions. To measure class separability, we adapted the notion of \textit{scatter matrices}~\citep{PatternRecognition} to address the presence of the u.u.s class . Specifically, assume $\{X_1, X_2, \ldots, X_m\}$
are the $m$ known classes with $\mu_i$ being the mean of $X_i$ and $\Sigma_i$ being the covariance matrix of $X_i$. For a u.u.s class $X_u$, its mean and covariance matrix are $\mu_u$ and $\Sigma_u$. The \textit{between-class scatter matrix} $S_b$ is defined to measure the separability between different classes: $S_b = \sum_{i=1}^m P(X_i) (\mu_i - \mu_0)(\mu_i - \mu_0)^T$, where $P(X_i)$ is the percentage of examples in $X_i$, compared to the total number of examples in the dataset. When measuring the distance between the $m$ known classes and the u.u.s class, we set $\mu_0=\mu_u$. Otherwise, if we want to measure the between-class distance within the known classes, we set $\mu_0 = \frac{1}{m}\sum_{i=1}^m\mu_i$. For the \textit{within-class scatter matrix} $S_w$, when measuring the u.u.s class, we simply define $S_w = \Sigma_u$, the covariance matrix of $X_u$. For the known classes, $S_w$ is the weighted sum of all the covariance matrices of the known classes: $S_w = \sum_{i=1}^m P(X_i)\Sigma_i$. To combine both scatter matrices, we use the $J1$ criterion \citep{PatternRecognition}:
\begin{equation}
    J1 = \frac{\text{trace}\{S_w+S_b\}}{\text{trace}\{S_w\}} = 1+\frac{\text{trace}\{S_b\}}{\text{trace}\{S_w\}}
\end{equation}
which increases by making the means of different classes spread out and intra-class variances small.
We evaluate RTSCV on different dataset configurations of varying class means and covariances, each corresponding to a unique $J1$ score. The result is plotted in Figure \ref{fig:ablation} (b) for known classes and Figure \ref{fig:ablation} (c) for unknown classes, which show a positive correlation between model performance and class separability in both cases.

\section{Experimental Evaluation}
\label{sec_experiment}
We evaluate RTSCV on both tabular datasets, where we pair RTSCV with classical classifiers like SVM and simple fully connected neural networks, and computer vision datasets, where we use deep convolutional neural networks, such as ResNet~\citep{ResNet} and DenseNet~\citep{DenseNet}. 

As established by Theorem~\ref{thm:Bayesian}, there is a sweet spot for the sample rate that best balances the classification of $x \in X_s$ by governing the prior $P'(X_s)$. Let sample-training ratio be the ratio of the size of the test set sample and the size of the training set. Figure~\ref{fig:sample_rate} plots RTSCV's performance on COIL-20 and CIFAR10-ImageNet used in Section \ref{sec:exp_tabular} and \ref{sec:exp_cv}, respectively, under varying sample-training ratio. The model performance climbs steeply with the sample size before the sample starts to over-represent the known data, and then decreases slowly. We search for an optimal $c$ using by assessing the misclassification of known data. The optimal sample rate is dataset-dependent and varies from 0.06 to 0.1. Note that the small cardinalities make RTSCV practical for efficiently sampling during model deployment. We also searched for the optimal number $k$, from 2 to 6, of cross-validation for the Letters, Pendigits, COIL-20 and MNIST datasets, and selected $k=3$.

\subsection{Evaluation on Tabular Datasets}
\label{sec:exp_tabular}
We selected the Letter Recognition dataset and Pendigits dataset which contain the hand-writings of 26 English letters and 10 digits, respectively.\footnote{UCI machine learning repository: http://archive.ics.uci.edu/ml} Further, we also down-sample the Columbia University Image Library (COIL-20), which contains grey-scale images of 20 objects~\citep{COIL20}, following the PCA-based technique by~\citep{Geng2020}. We also select the MNIST, which contains 10 digit classes of dimension $28 \times 28$~\citep{mnist}.

On the tabular datasets except for MNIST, we use a standard SVM implementation as the base model. We first report the results of the pre-rectified model, i.e., the performance of the base classifier under the presence of u.u.s without applying RTSCV. To compare with RTSCV, we select three previously proposed methods in the literature of u.u.s discovery, which are (1) EVM \citep{EVM}, (2) 1-vs-Set \citep{OpenSet} and (3) WSVM \citep{WSVM}. In particular, 1-vs-Set and WSVM are both SVM-based algorithms. For MNIST, we use a simple MLP, a four-layer fully connected network, as the base model. Finally, to create u.u.s in the test set of the chosen datasets, we remove certain classes of data from the training set while keeping the test set unchanged.

\subsubsection{Evaluation Metrics}
To be consistent with previous methods, we evaluate the F-measure of our method and other baselines against the \textit{openness}~\citep{OpenSet}.

\textbf{Openness. } The u.u.s may be divided into different classes that span different geometric regions of the feature space. The \textit{openness} metric proposed by \citep{OpenSet} increases with the number of u.u.s classes. A larger openness indicates a larger number of u.u.s classes relative to that of known classes in the test data:
\begin{equation}
    \text{openness} = 1-\sqrt{\frac{2\times |\text{training classes}|}{|\text{test classes}|+|\text{target classes}|}}
\end{equation}

\textbf{F-measure} is the harmonic mean of the precision and recall. In our multi-class scenario, it is obtained by averaging the classwise F-measures, combining classification accuracy of both the known and u.u.s classes.

\begin{table*}[t]
\caption{Experimental results of the RTSCV framework on tabular datasets. Bold and underlined numbers represent the best and second best results, respectively.}
\label{table:tabular-result}
\vskip 0.15in
\begin{center}
\begin{small}
\begin{tabular}{ccccc}
\toprule
\multirow{2}{*}{Dataset}& \multirow{2}{*}{Openness (\%)} & F-measure (\%) \\
&   & Pre-rectified / EVM / 1-vs-Set / WSVM / RTSCV &  \\
\midrule
\multirow{2}{*}{\tabincell{c}{Letter\\ (SVM)}} 
& 14.5 & 69.8 / 89.8 / 72.8 / \underline{91.2} / \textbf{92.1} \\
& 25.5 & 54.2 / 82.8 / 56.1 / \underline{85.7} / \textbf{91.3} \\
\midrule
\multirow{2}{*}{\tabincell{c}{Pendigits\\ (SVM)}} 
& 9.3 & 78.3 / \underline{97.0} / 75.4 / 93.1 / \textbf{97.4} \\
& 18.4 & 69.5 / \underline{92.9} / 62.3 / 88.4 / \textbf{97.2} \\
\midrule
\multirow{2}{*}{\tabincell{c}{COIL-20\\ (SVM)}}
& 9.3 & 88.4 / \textbf{95.7} / 70.2 / 85.6 / \underline{95.1} \\
& 18.4 & 78.7 / \underline{93.2} / 55.7 / 84.5 / \textbf{95.3} \\
\midrule
\multirow{3}{*}{\tabincell{c}{MNIST\\ (MLP)}} 
& 13.4 & 59.6 / \,\, - \,\, / \,\, - \,\, / \,\, - \,\, / \textbf{94.8} \\
& 24.4 & 41.2 / \,\, - \,\, / \,\, - \,\, / \,\, - \,\, / \textbf{96.2} \\
& 42.3 & 21.4 / \,\, - \,\, / \,\, - \,\, / \,\, - \,\, / \textbf{96.8} \\
\bottomrule
\end{tabular}
\end{small}
\end{center}
\vskip -0.1in
\end{table*}

\subsubsection{Results}
We present the experimental results on all four datasets in Table~\ref{table:tabular-result}. The F-measure of the classifier is plotted as a function of openness, which we vary by controlling the number of known classes removed from the training set. RTSCV achieves a consistent performance improvement over the pre-rectified model, closing a performance gap as large as 41\% in some cases. Coupled with the SVM, RTSCV beats previous u.u.s detection methods on 5 out 6 settings while maintaining a small disadvantage to the EVM on COIL-20 with openness 9.3\%. For RTSCV with MLP on MNIST, it performs the best with the largest openess, i.e., 42.3\% where 9 out of 10 classes are selected as u.u.s during test. This further assures the robustness of RTSVC under a disproportionate amount of u.u.s in the test data.

\begin{table}[t!]
\caption{Experimental results of the RTSCV framework on computer vision datasets. Bold and underlined numbers represent the best and second best results, respectively.}
\label{table:cv-result}
\vskip 0.15in
\begin{center}
\begin{small}
\begin{tabular}{ccccc}
\toprule
\multirow{2}{*}{Dataset} & \multirow{2}{*}{u.u.s} & Classification Acc. (\%) & Detection Acc. (\%) & AUROC (\%) \\ \cline{3-5} 
\\[-0.8em]
    &   &  \multicolumn{3}{c}{ Baseline / ODIN / MD / RTSCV} \\
\midrule
\multirow{3}{*}{\tabincell{c}{CIFAR-10\\ (ResNet)}} 
& SVHN & \underline{93.9} / - / \underline{93.9} / \textbf{95.5}  &  85.1 / 91.1 / \underline{95.8} / \textbf{99.7}  &  89.9 / 96.7 / \underline{99.1} / \textbf{99.9} \\
& ImageNet & \underline{93.9} / - / \underline{93.9} / \textbf{95.3} &  85.1 / 86.5 / \underline{99.5} / \textbf{98.8} &  91.0 / 94.0 / \underline{99.5} / \textbf{99.9} \\
& LSUN & \underline{93.9} / - / \underline{93.9} / \textbf{95.5} & 85.3 / 86.7 / \underline{97.7} / \textbf{99.8} & 91.0 / 94.1 / \underline{99.7} / \textbf{100} \\
\midrule
\multirow{3}{*}{\tabincell{c}{CIFAR-100\\ (ResNet)}}
& SVHN & \underline{75.6} / - / 74.8 / \textbf{78.9} & 73.2 / 88.0 / \underline{93.7} / \textbf{99.2} & 79.5 / 93.9 / \underline{98.4} / \textbf{99.9} \\
& ImageNet & \underline{75.6} / - / 74.8 / \textbf{78.9} & 70.8 / 80.1 / \underline{93.3} / \textbf{98.3} & 77.2 / 87.6 / \underline{98.2} / \textbf{99.8} \\
& LSUN & \underline{75.6} / - / 74.8 / \textbf{79.3} & 69.9 / 78.3 / \underline{93.5} / \textbf{99.5} & 75.8 / 85.6 / \underline{98.2} / \textbf{99.9}  \\
\midrule
\multirow{3}{*}{\tabincell{c}{SVHN\\ (ResNet)}}
& CIFAR-10 & \textbf{95.8} / - / \underline{95.7} / 94.5 &  90.0 / 89.4 / \underline{96.9} / \textbf{97.9} &  92.9 / 92.1 / \underline{99.3} / \textbf{99.9} \\
& ImageNet & \textbf{95.8} / - / \underline{95.7} / 94.7 &  90.4 / 89.4 / \textbf{99.1} / \underline{98.4} &  93.5 / 92.0 / \textbf{99.9} / \textbf{99.9} \\
& LSUN &  \textbf{95.8} / - / \underline{95.7} / 95.0  &  89.0 / 87.2 / \underline{99.5} / \textbf{99.8} &  91.6 / 89.4 / \underline{99.9} / \textbf{100} \\
\midrule
\multirow{3}{*}{\tabincell{c}{CIFAR-10\\ (DenseNet)}}
& SVHN & \underline{92.9} / - / 91.7 / \textbf{94.8} & 83.2 / 91.4 / \underline{93.9} / \textbf{98.1} & 89.9 / 95.5 / \underline{98.1} / \textbf{99.9} \\
& ImageNet & \underline{92.9} / - / 91.7 / \textbf{95.0} & 88.5 / 93.9 / \underline{95.0} / \textbf{95.2} & 94.1 / 98.5 / \underline{98.8} / \textbf{99.9} \\
& LSUN & \underline{92.9} / - / 91.7 / \textbf{94.9} & 90.3 / 95.7 / \underline{96.3} / \textbf{98.7} & 95.4 / 99.2 / \underline{99.3} / \textbf{99.9} \\
\midrule
\multirow{3}{*}{\tabincell{c}{CIFAR-100\\ (DenseNet)}}
& SVHN & \underline{72.3} / - / 68.2 / \textbf{72.8}  & 75.6 / 86.6 / \underline{91.5} / \textbf{98.8} & 82.7 / 93.8 / \underline{97.2} / \textbf{99.9}  \\
& ImageNet & \underline{72.3} / - / 68.2 / \textbf{73.4} & 65.7 / 77.0 / \underline{92.2} / \textbf{95.8} & 71.7 / 85.2 / \underline{97.4} / \textbf{99.8} \\
& LSUN & \underline{72.3} / - / 68.2 / \textbf{73.5} & 64.9 / 77.1 / \underline{93.9} / \textbf{98.9} &  70.8 / 85.5 / \underline{98.0} / \textbf{99.9} \\
\midrule
\multirow{3}{*}{\tabincell{c}{SVHN\\ (DenseNet)}} 
& CIFAR-10 & \underline{95.3} / - / 95.2 / \textbf{96.6}  & 86.6 / 85.8 / \underline{95.9} / \textbf{96.9} & 91.9 / 91.4 / \underline{98.9} / \textbf{99.9} \\
& ImageNet & \underline{95.3} / - / 95.2 / \textbf{96.6} & 90.2 / 90.4 / \textbf{98.9} / \underline{97.4} & 94.8 / 95.1 / \textbf{99.9} / \textbf{99.9} \\
& LSUN & \underline{95.3} / - / 95.2 / \textbf{96.8} & 89.1 / 89.2 / \underline{99.3} / \textbf{99.9} & 94.1 / 94.5 / \underline{99.9} / \textbf{100} \\
\bottomrule
\end{tabular}
\end{small}
\end{center}
\vskip -0.1in
\end{table}

\subsection{Evaluation on Computer Vision Datasets}
\label{sec:exp_cv}
We also evaluate RTSCV on several more challenging pattern recognition tasks, such as computer vision datasets. Specifically, we select the CIFAR-10 and CIFAR-100, both containing colored object images \citep{cifar10}, as well as the Street View House Numbers (SVHN) from Google Street View project~\citep{svhn}. All image dimensions of these datasets are $32 \times 32$.

For each of the computer vision datasets, we separately test our RTSCV framework with ResNet \citep{ResNet} and DenseNet \citep{DenseNet}, two network architectures that have achieved good performance on large benchmark datasets. (See the supplementary material for details on the training configuration). For comparison, we include the results of the baseline method by \citep{2017OOD-baseline}, the ODIN \citep{ODIN-OOD}, and the Mahalanobis method (MD) \citep{MD-OOD}, three previous approaches on detecting OOD samples (u.u.s) for neural networks. Different from Section \ref{sec:exp_tabular}, here we create u.u.s in the test set by introducing additional computer vision datasets to the target space. Specifically, we use the resized version of Tiny-ImageNet \citep{image-net} and LSUN \citep{lsun} following the techniques in \citep{ODIN-OOD,MD-OOD}. We present a summary of the known datasets and u.u.s datasets we used in Table~\ref{table:cv-result}.
 
\subsubsection{Evaluation Metrics}
We adopt the following metrics to evaluate performance on both known classes classification and u.u.s detection.  

\textbf{Classification accuracy} is the accuracy of the known class classification, i.e., the total number of correct predictions on the known class labels divided by the total number of known class test samples.

\textbf{Detection accuracy} is the number of u.u.s in the test data that are correctly detected by the model divided by the total number of the u.u.s.

\textbf{AUROC} depicts the relationship between true positive rate (TPR) and false positive rate (FPR)~\citep{auroc}. A higher AUROC indicates a higher probability for a positive instance to rank higher than a negative one.

\subsubsection{Results}
We display the experimental results in Table~\ref{table:cv-result}. RTSCV coupled with ResNet or DenseNet achieves the overall best performance with respect to all of the evaluation metrics. In particular, it has a significant improvement on the u.u.s detection accuracy while maintaining a high classification accuracy for the known classes. This suggests that RTSCV is the most effective in identifying u.u.s, without degrading the original model, even on more complex datasets with more complex classification models.

\section{Conclusion and Future Work}
With the goal of reducing deployment errors and model bias due to deficient training data, RTSCV is a proposal of an algorithmic framework that adds the flexibility for a base classification models to rectify a trained model at deployment. We provide a rigorous theoretical analysis of correctness and performance guarantees of the process of minimizing its structural mismatch with a target space, based on objectives that most modern classifers aim to optimize. RTSCV exhibits consistent performance improvements over both the pre-rectified model and previously proposed approaches that share the same goals on 7 benchmark datasets. Moreover, it does not assume the presence of an oracle, as in the case of active learning.

Our ongoing work focus on improvements of the RTSCV, especially due to concerns with the computational cost of cross-validation. We are developing alternatives to cross-validation that could work equally well, while reducing the computational cost dramatically. 
For instance, we are in the process of investigating semi-supervised clustering \citep{semi-supervised-survey,semi-supervised-clustering}. In the supplementary materials, we discuss our initial efforts in this direction. Our preliminary results suggest that such an approach works equally well as RTSCV when the u.u.s consist of only one cluster and have a relatively small covariance. Nevertheless, when the u.u.s form multiple clusters or the clusters have high variance, the performance of this alternative approach drops significantly, compared to that of RTSCV.

\section{Acknowledgements} 

This research was partially supported by a National Natural Science Foundation of China (NSFC) grant \#61850410536. Abrahao developed part of this research while affiliated with Microsoft Research AI, Redmond.

\bibliographystyle{unsrtnat}
\bibliography{references} 

\setcounter{section}{0}
\newpage
\section{Proofs of Theorems}
\subsection{Theorem 3.1}
\begin{proof}
Let's first prove the case of $x \sim X_k$. We know that
$$\E_{x\sim X_k}(L_k(x)) = \underset{\mathbb{R}^{d}}{\int}L_k^{2}(x)dx = \mathcal{N}(\mu_{k},\mu_{k},2\Sigma_{k})$$
$$\E_{x\sim X_k}(L_i(x)) = \underset{\mathbb{R}^{d}}{\int}L_i(x)L_{k}(x)dx = \mathcal{N}(\mu_{i},\mu_{k},\Sigma_{k}+\Sigma_{i})$$
Expanding the two terms, we have
$$\E_{x\sim X_k}(L_k(x)) = \frac{1}{(2\pi)^{\frac{d}{2}}\sqrt{|2\Sigma_k|}}\exp(-\frac{1}{2}(\mu_k - \mu_k)^{T}(2\Sigma_k)^{-1}(\mu_k - \mu_k)) = \frac{1}{(2\pi)^{\frac{d}{2}}\sqrt{|2\Sigma_k|}}$$
$$\E_{x\sim X_k}(L_i(x)) = \frac{1}{(2\pi)^{\frac{d}{2}}\sqrt{|\Sigma_k+\Sigma_{i}|}}\exp(-\frac{1}{2}(\mu_i - \mu_k)^{T}(\Sigma_k+\Sigma_i)^{-1}(\mu_i - \mu_k))$$
Recall that $\Sigma_i = \sigma_i^{2}I,\Sigma_u = \sigma_u^{2}I$. Let $\mathcal{N}(\mu_{k},\mu_{k},2\Sigma_{k}) >  \mathcal{N}(\mu_{i},\mu_{k},\Sigma_{k}+\Sigma_{i}) $ and we have
$$\|\mu_{k}-\mu_{i}\|^{2} \geq d \cdot  (\sigma_{i}^{2}+\sigma_{k}^{2}) \cdot \ln \bigg(\frac{2\sigma_{k}^{2}}{\sigma_{k}^{2}+\sigma_{i}^{2}}\bigg)$$
If for all other classes other than class $k$, the above conditions hold, then
$$\E_{x\sim X_k}(L_k(x)) = P(X_u)\E_{x\sim X_k}(L_k(x)) + \sum_{i=1}^m P(X_i) \E_{x\sim X_k}(L_k(x)) >  \E_{x\sim X_k}(L_s(x))$$
since $P(X_u)+\sum_{i=1}^m P(X_i)=1$. Similarly, for the second case of $x \sim X_u$, we can compute that $\E_{x\sim X_u}(L_k(x)) = \mathcal{N}(\mu_k,\mu_u,\Sigma_u+\Sigma_k)$. Notice that 
$$\E_{x\sim X_u}(L_s(x)) > P(X_u)\mathcal{N}(\mu_u,\mu_u,2\Sigma_u) + P(X_k)\mathcal{N}(\mu_k,\mu_u,\Sigma_u+\Sigma_k)$$
Hence, to obtain a sufficient condition of $\mathbb{E}_{x\sim X_u}(L_s(x)) > \mathbb{E}_{x\sim X_u}(L_k(x))$, we simply need to require $P(X_u)\mathcal{N}(\mu_u,\mu_u,2\Sigma_u) > (1-P(X_k)) \mathcal{N}(\mu_k,\mu_u,\Sigma_u+\Sigma_k)$, which yields
$$\frac{\|\mu_{k}-\mu_{u}\|_2^{2}}{\sigma_{k}^{2}+\sigma_{u}^{2}} \ge 2\ln \bigg(\frac{1-P(X_k)}{P(X_u)}\bigg)+ d \cdot \ln \bigg(\frac{2\sigma_{u}^{2}}{\sigma_{k}^{2}+\sigma_{u}^{2}}\bigg)$$
\end{proof}

\subsection{Theorem 3.2}
\begin{proof}
For $x \sim X_k$, to show that $\E_{x\sim X_k}(p(X_k|x)) > \E_{x\sim X_k}(p(X_s|x))$, by Bayes' theorem, we just need to show:
$$\E_{x\sim X_k}(L_{k}(x)P^{\prime}(X_k)) > \E_{x\sim X_k}(L_{s}(x)P^{\prime}(X_s))$$
From the proof of Theorem 3.1, we have:
$$\E_{x\sim X_k}(L_{k}(x)P^{\prime}(X_k)) = P^{\prime}(X_k)\mathcal{N}(\mu_{k},\mu_{k},2\Sigma_{k})$$
$$\E_{x\sim X_k}(L_{i}(x)P^{\prime}(X_s)) = P^{\prime}(X_s)\mathcal{N}(\mu_{i},\mu_{k},\Sigma_{k}+\Sigma_{i})$$
Let $P'(X_k)\mathcal{N}(\mu_{k},\mu_{k},2\Sigma_{k}) > P'(X_s)\mathcal{N}(\mu_{i},\mu_{k},\Sigma_{k}+\Sigma_{i})$, we obtain:
$$\frac{\|\mu_{k}-\mu_{i}\|_2^{2}}{\sigma_{k}^{2}+\sigma_{i}^{2}} \geq 2\ln \bigg(\frac{P^{\prime}(X_s)}{P^{\prime}(X_k)}\bigg)+ d \cdot \ln \bigg(\frac{2\sigma_{k}^{2}}{\sigma_{k}^{2}+\sigma_{i}^{2}}\bigg)$$

For $x \sim X_u$, We just need to show that $\E_{x\sim X_u}(L_{s}(x)P^{\prime}(X_s)) > \E_{x\sim X_u}(L_{k}(x)P^{\prime}(X_k))$. 
From the proof of Theorem 3.1, we know that:
$$\E_{x\sim X_u}(L_{s}(x)P^{\prime}(X_s)) > P^{\prime}(X_s)P(X_u)\mathcal{N}(\mu_u,\mu_u,2\Sigma_u) + P^{\prime}(X_s)P(X_k)\mathcal{N}(\mu_k,\mu_u,\Sigma_u+\Sigma_k)$$
$$\E_{x\sim X_u}(L_{k}(x)P^{\prime}(X_k)) = P^{\prime}(X_k)\mathcal{N}(\mu_k,\mu_u,\Sigma_u+\Sigma_k)$$
Letting $P^{\prime}(X_s)P(X_u)\mathcal{N}(\mu_u,\mu_u,2\Sigma_u) > [P^{\prime}(X_k)- P^{\prime}(X_s)P(X_k)] \mathcal{N}(\mu_k,\mu_u,\Sigma_u+\Sigma_k)$ gives the desired sufficient condition:
$$\frac{\|\mu_{k}-\mu_{u}\|_2^{2}}{\sigma_{k}^{2}+\sigma_{u}^{2}} \geq 2\ln \bigg(\frac{P^{\prime}(X_k)- P^{\prime}(X_s)P(X_k)}{P^{\prime}(X_s)P(X_u)}\bigg)
+ d \cdot \ln \bigg(\frac{2\sigma_{u}^{2}}{\sigma_{k}^{2}+\sigma_{u}^{2}}\bigg)$$
\end{proof}

\subsection{Theorem 3.3}
\begin{proof}
For $x \sim X_k$, we know from statistical theory that $D_M^{2}(x, X_k)$ has a $\chi_{d}^{2}$ distribution with $d$ degrees of freedom. So we have
$$\E_{x\sim X_k}(D_M^{2}(x, X_k)) = \E_{x\sim \chi_{d}^{2}}(x) = d$$
Let's now investigate the distribution of $D_M^{2}(x, X_s)$. As $\Sigma_{s}$ is real symmetric and diagonalizable, it has an orthogonal decomposition:
$$\Sigma_{s} = U \Lambda U^{-1} = U \Lambda U^{T} = \sum_{j=1}^{d} \lambda_{j} u_{j} u_{j}^{T}$$
$$\Sigma_{s}^{-1} = U \Lambda^{-1} U^{-1} = U \Lambda^{-1} U^{T} = \sum_{j=1}^{d} \lambda_{j}^{-1} u_{j} u_{j}^{T}$$
where $\{\lambda_{j}\}_{j = 1}^{d}$ are the eigenvalues for $\Lambda$ and $\{u_{j}\}_{j = 1}^{d}$ are the corresponding eigenvectors. Plugging the decomposition into the formula of $D_M^{2}(x, X_s)$, we get
$$D_M^{2}(x, X_s) = (x-\mu_{s})^{T}\Sigma_{s}^{-1}(x - \mu_{s})= \sum_{j = 1}^{d} [\lambda_{j}^{-\frac{1}{2}}u_{j}^{T}(x - \mu_{s})]^{2}=\sum_{j = 1}^{d} Y_{j}^{2}$$
Since $x \sim \mathcal{N}(\mu_k,\Sigma_k)$, $Y_{j} = \lambda_{j}^{-\frac{1}{2}}u_{j}^{T}(x - \mu_{s})$ is an affine transformation of a multivariate Gaussian distribution, $Y_{k}$ has a univariate normal distribution:
$$\E(Y_{j}) = \lambda_{j}^{-\frac{1}{2}}u_{j}^{T} (\E(X) - \mu_{s}) = \lambda_{j}^{-\frac{1}{2}}u_{j}^{T} (\mu_{k} - \mu_{s})$$
$$Var(Y_{j}) = \lambda_{j}^{-\frac{1}{2}}u_{j}^T \Sigma_{k} \lambda_{j}^{-\frac{1}{2}}u_{j} = \frac{\sigma_{k}^{2}}{\lambda_{j}}$$
Therefore, we can infer that
$$\E(Y_{j}^{2}) =  \sigma_{k}^{2}\lambda_{j}^{-1} + [\lambda_{j}^{-\frac{1}{2}}u_{j}^{T} (\mu_{k} - \mu_{s})]^{2}$$
$$\E_{x\sim X_k}(D_M^{2}(x, X_s)) = \sum_{j=1}^d \E(Y_{j}^{2}) = \sigma_{k}^{2}\sum_{j=1}^{d}\lambda_{j}^{-1} + \sum_{j=1}^{d}\frac{[u_{j}^{T}(\mu_{k}-\mu_{s})]^{2}}{\lambda_{j}}$$
Here, since we assume $\Sigma_s = \sigma_{s}^{2}I$, $\{u_{k}\}_{k=1}^{d}$ is the canonical basis of $\mathbb{R}^{d}$. So the formula can be further simplified as:
$$\E_{x\sim X_k}(D_M^{2}(x, X_s)) = d\cdot \frac{\sigma_{k}^{2}}{\sigma_{s}^{2}} + \frac{\|\mu_{k}-\mu_{s}\|^{2}}{\sigma_{s}^{2}}$$
Let $\E_{x\sim X_k}(D_M^{2}(x, X_k)) < \E_{x\sim X_k}(D_M^{2}(\boldsymbol{x}, X_s))$, we obtain the sufficient condition:
$$\|\mu_{k} - \mu_{s}\|^{2} > d \cdot \sigma_{s}^{2} \cdot (1 - \frac{\sigma_{k}^{2}}{\sigma_{s}^{2}})$$
Similarly, for the case of $x \sim X_u$, following the above results we have:
$$\E_{x\sim X_u}(D_M^{2}(x, X_s)) = d\cdot \frac{\sigma_{u}^{2}}{\sigma_{s}^{2}} + \frac{\|\mu_{u}-\mu_{s}\|^{2}}{\sigma_{s}^{2}}$$
$$\E_{x\sim X_u}(D_M^{2}(x, X_k)) = d\cdot \frac{\sigma_{u}^{2}}{\sigma_{k}^{2}} + \frac{\|\mu_{u}-\mu_{k}\|^{2}}{\sigma_{k}^{2}}$$
Let $\E_{x\sim X_u}(D_M^{2}(x, X_s)) < \E_{x\sim X_u}(D_M^{2}(x, X_k))$ and we have:
$$\frac{\|\mu_{u}-\mu_{k}\|^{2}}{\sigma_{k}^{2}} - \frac{\|\mu_{u}-\mu_{s}\|^{2}}{\sigma_{s}^{2}} > d \cdot \Big(\frac{\sigma_{u}^{2}}{\sigma_{s}^{2}} - \frac{\sigma_{u}^{2}}{\sigma_{k}^{2}}\Big)$$
\end{proof}

\newpage
\section{Training Configuration for Computer Vision Datasets}
\begin{table}[h]
\caption{ResNet}
\begin{center}
\begin{tabular}{cc}
\textbf{Parameter} & \textbf{Value} \\ \hline
\\[-0.8em]
optimizer & SGD with Nesterov Momentum \\
momentum & 0.9 \\
learning rate & 5e-4 \\
epochs & 100 \\
learning rate scheduler &  learning rate decreases by 50\% after every 20 epochs\\
cross-validation folds & 3 \\
number of layers & 34
\end{tabular}
\end{center}
\end{table}

\begin{table}[h]
\caption{DenseNet}
\begin{center}
\begin{tabular}{cc}
\textbf{Parameter} & \textbf{Value} \\ \hline
\\[-0.8em]
optimizer & SGD with Nesterov Momentum \\
momentum & 0.9 \\
learning rate & 5e-4 \\
epochs & 100 \\
learning rate scheduler &  learning rate decreases by 50\% after every 20 epochs\\
cross-validation folds & 3 \\
number of layers & 100
\end{tabular}
\end{center}
\end{table}

\newpage
\section{Discussion on Semi-Supervised Clustering}
\begin{figure}[h]
\centering
\includegraphics[width=0.45\textwidth]{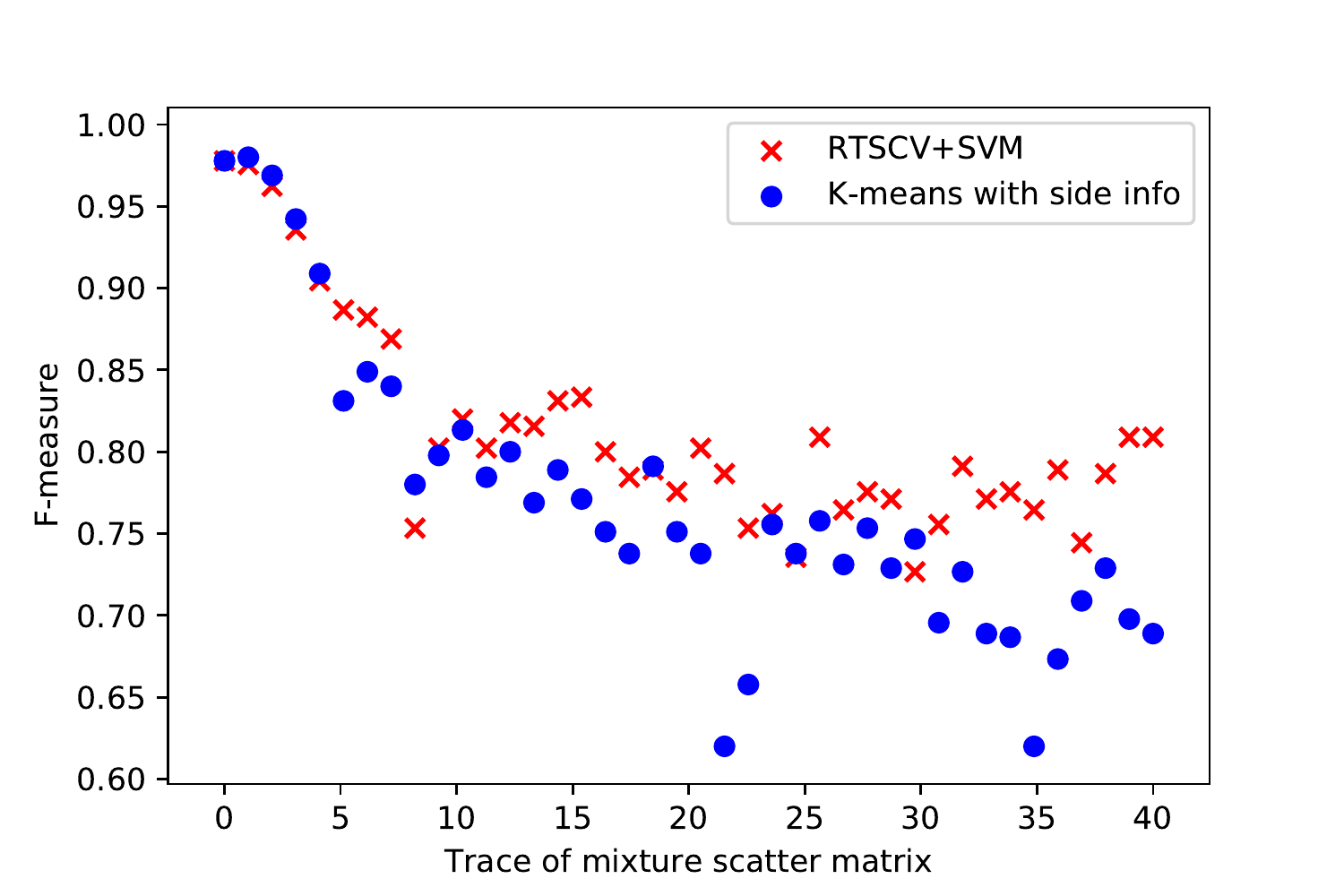}
\quad
\includegraphics[width=0.45\textwidth]{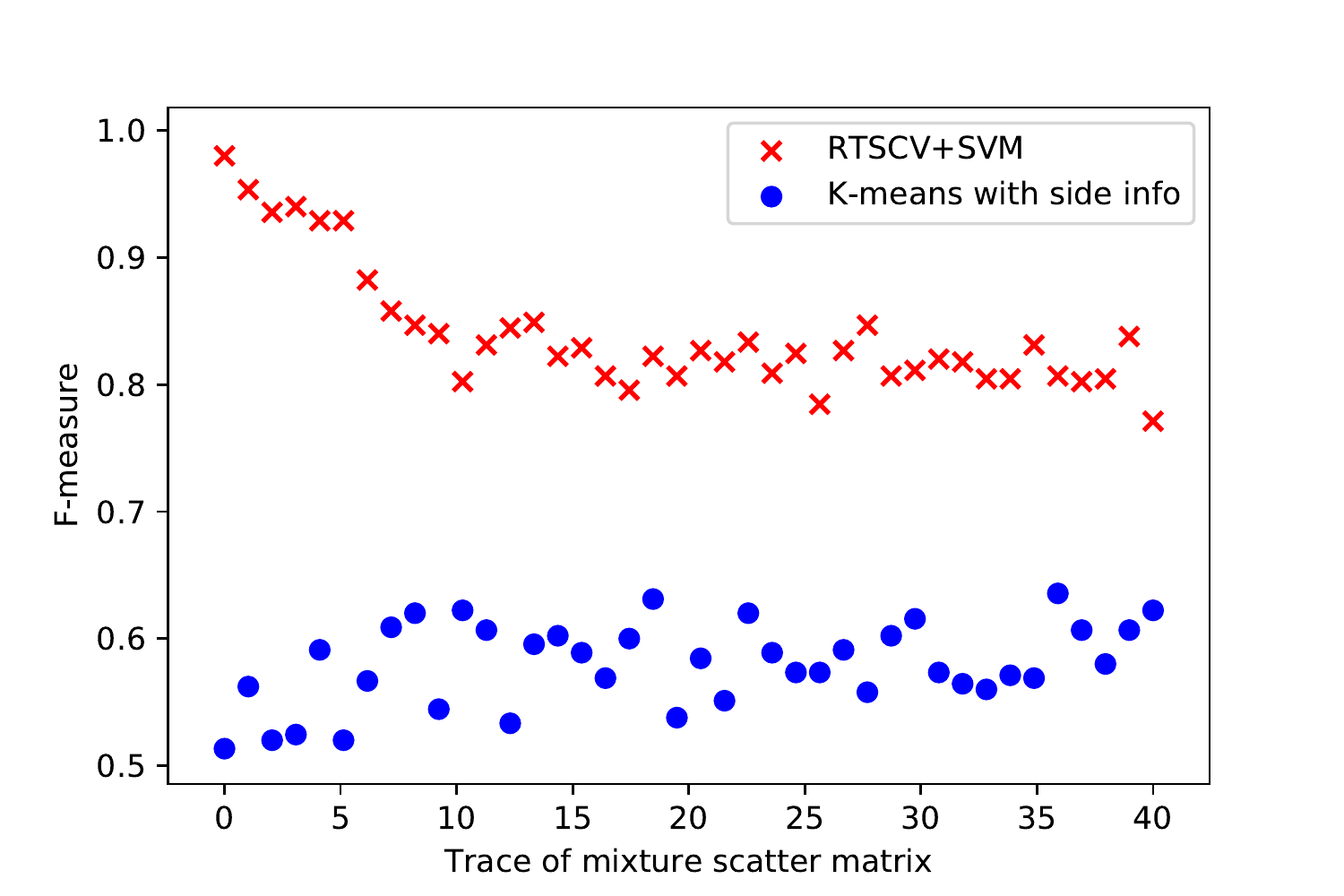}
\qquad
\includegraphics[width=0.45\textwidth]{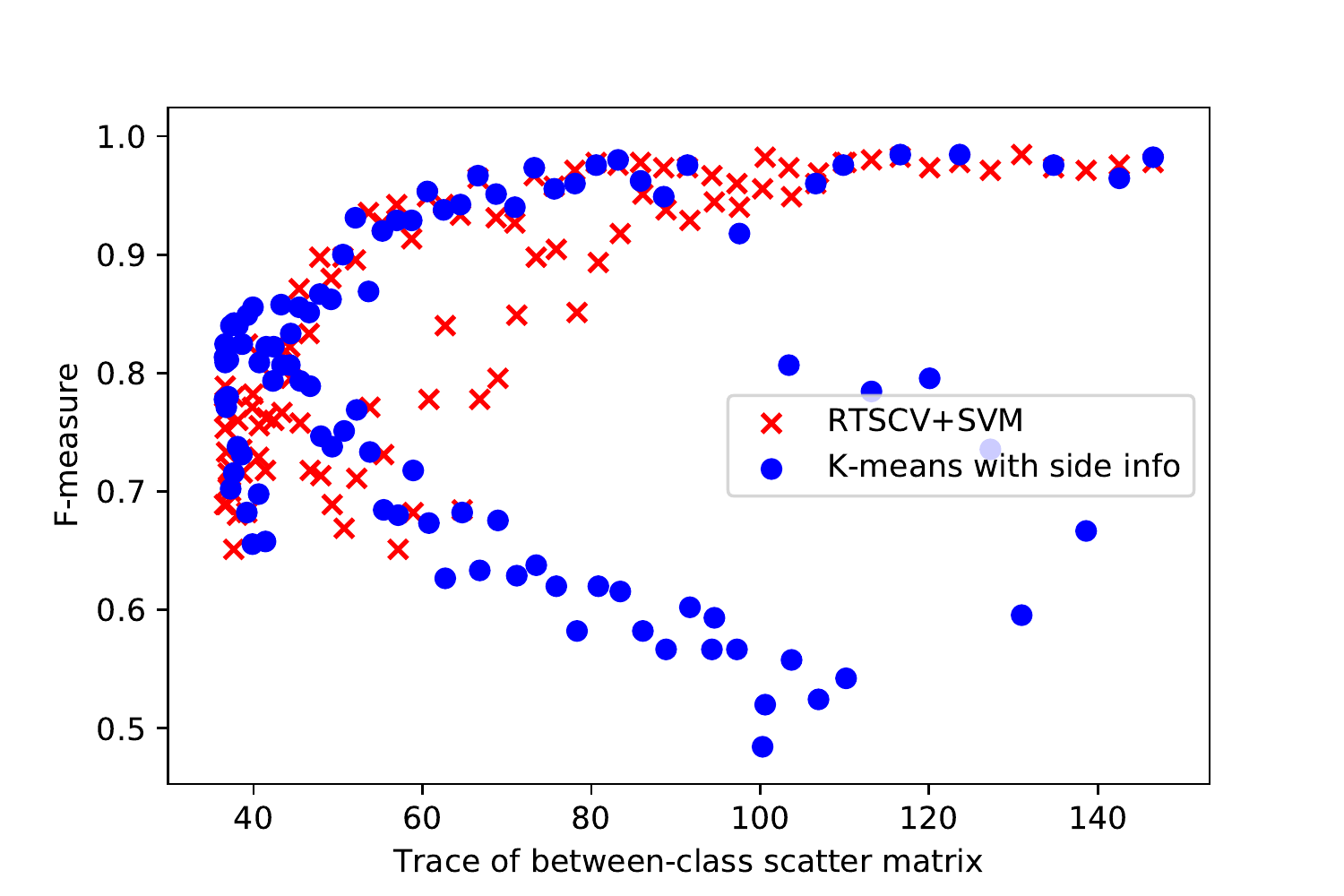}
\caption{Comparison between Clustering with Side Information (CSI) and our RTSCV methods under different synthetic dataset settings. Top: There is one u.u. cluster for the left plot and two u.u. clusters for the right plot. For both plots the u.u. class is located far away from the 10 known classes and only the covariance of the u.u. class is altered across different trials. }
\label{fig:csi}
\end{figure}

We also believe that it is worth discussing the possibility of resorting to semi-supervised clustering as an alternative to cross-validation during the process of re-classifying sample class $X_s$, given its increasing popularity and the great potential of being more computationally economical. Given a small amount of labeled data, semi-supervised clustering performs ordinary clustering tasks under the constraints of \textit{must-links} (two points must be in the same cluster) and \textit{cannot-links} (two points cannot be in the same cluster), provided by the labeled data \citep{semi-supervised-survey}. In our scenario, the objective of re-classifying $X_s$ can be viewed equivalent to dividing $X_s$ into several clusters, one of which corresponds to either a known or u.u. class, with the assistance of the labeled data from the entire training set. This is also called \textit{clustering with side information (CSI)} in the literature \citep{semi-supervised-survey}. 

To test this alternative, we adopt a novel but simple method called Seeded-KMeans \citep{semi-supervised-clustering}. Specifically, given $M$ known classes $X_1,X_2,\ldots,X_M$ in the training set, we run an $(M+1)$-Means clustering algorithm on sample set $X_s$, with the initial centers of each cluster set to the mean feature vectors of $X_1, X_2,\ldots,X_M$ and $X_s$, respectively. After the clustering converges, we assign the label of each cluster of $X_s$ according to the class membership of the initial seeding of the corresponding center. In other words, a cluster initially seeded by the mean of some known class $X_k$ will be labeled as $X_k$, and a cluster initially seeded by the mean of $X_s$ will be labeled as the u.u. class. 

Our primary experiment suggests that such an approach works equally well as the RTSCV method when the u.u.s consist of only one cluster (sub-class) and are far away from the known base classes. As illustrated in the top-left plot of Figure \ref{fig:csi}, in such a setting CSI has a very similar OSR performance as our RTSCV, under different covariance levels of the u.u. class. Nevertheless, when the u.u.s form multiple clusters or are close to the base classes, the performance of CSI plunges significantly, as illustrated in the top-right and bottom plots of Figure \ref{fig:csi}. This is possibly because of the large inconsistency between the mean of $X_s$ as the initial seed of the u.u. class and the true u.u.s distribution. Bottom: There is one u.u. cluster, whose distance to the known base classes is altered across trials.

In response to that, one potential improvement of the CSI method might be to incorporate some priors on the distribution of the u.u. class, i.e., the number of sub-classes or the means of them, with light involvement of human experts. We believe that this is a very promising direction for future works.

\newpage
\section{Plots of RTSCV Decision Boundaries}
\subsection{RTSCV+SVM Decision Boundaries}

\begin{figure}[h!]
\centering
\includegraphics[width=0.31\textwidth]{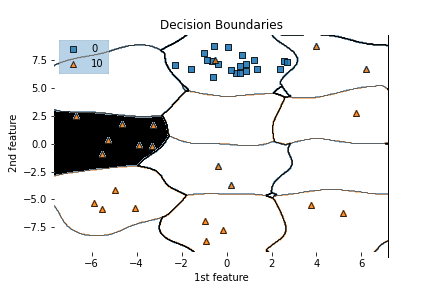}
\includegraphics[width=0.31\textwidth]{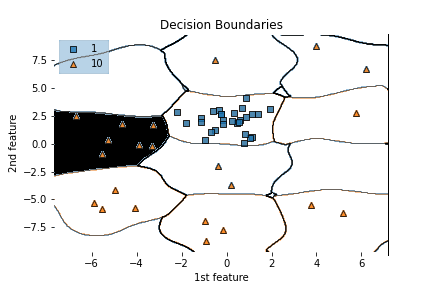}
\includegraphics[width=0.31\textwidth]{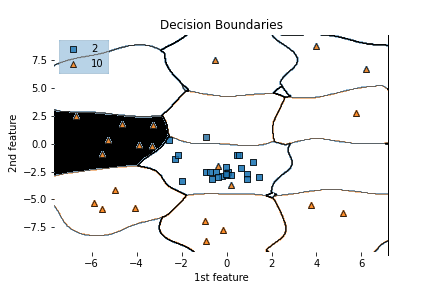}
\includegraphics[width=0.31\textwidth]{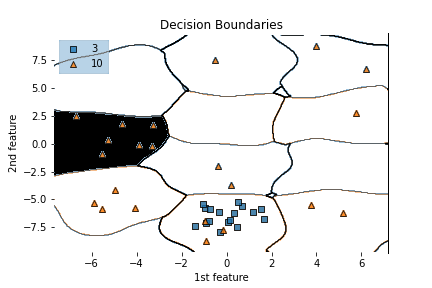}
\includegraphics[width=0.31\textwidth]{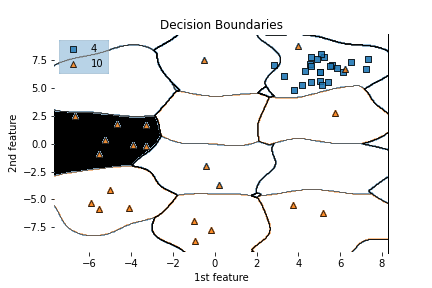}
\includegraphics[width=0.31\textwidth]{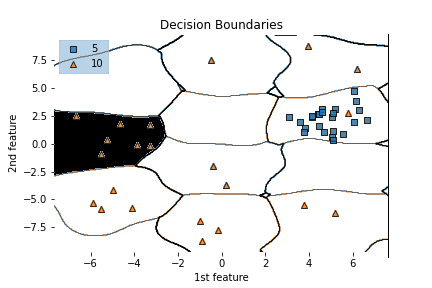}
\includegraphics[width=0.31\textwidth]{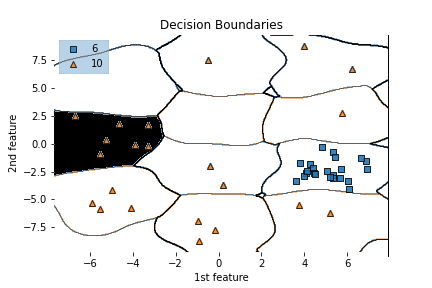}
\includegraphics[width=0.31\textwidth]{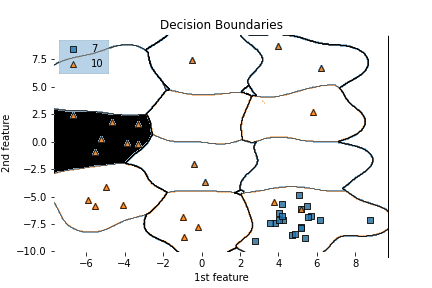}
\includegraphics[width=0.31\textwidth]{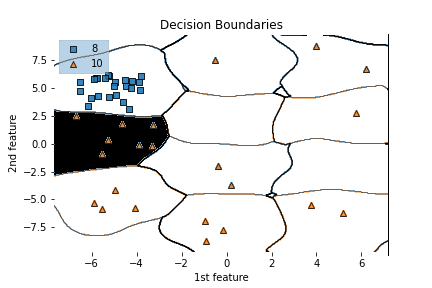}
\includegraphics[width=0.31\textwidth]{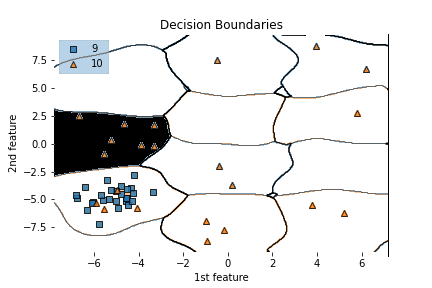}
\caption{RTSCV decision boundaries after cross-validation using SVM, fitted on the entire augmented training set consisting of 10 known classes and the sample class $X_s$. The black region represents the dummy (u.u.s) class where we intend to trap the u.u.s. Points from the sample class are represented by triangles with pseudo-label 10 while points from one of the known classes are represented by squares with the respective class label.}
\end{figure}

\newpage
\subsection{RTSCV+KNN Decision Boundaries}
\begin{figure}[h!]
\centering
\includegraphics[width=0.31\textwidth]{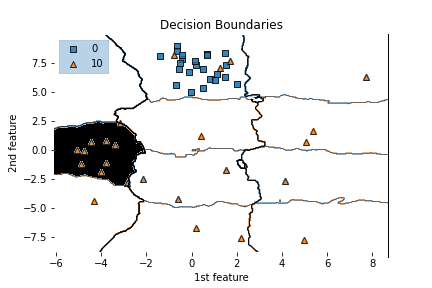}
\includegraphics[width=0.31\textwidth]{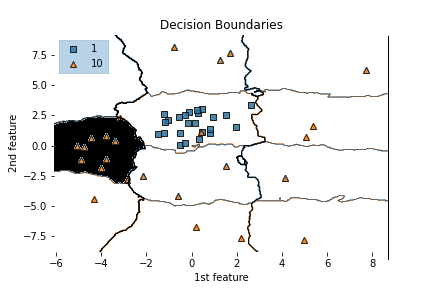}
\includegraphics[width=0.31\textwidth]{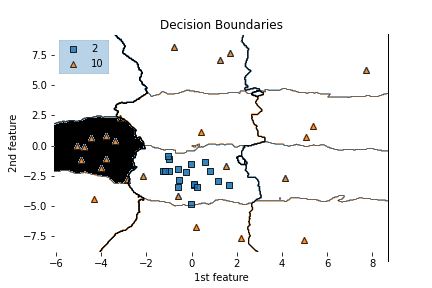}
\includegraphics[width=0.31\textwidth]{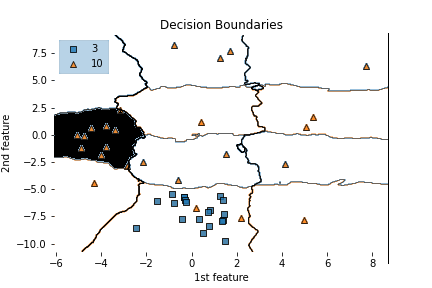}
\includegraphics[width=0.31\textwidth]{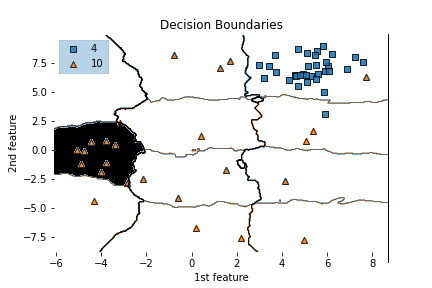}
\includegraphics[width=0.31\textwidth]{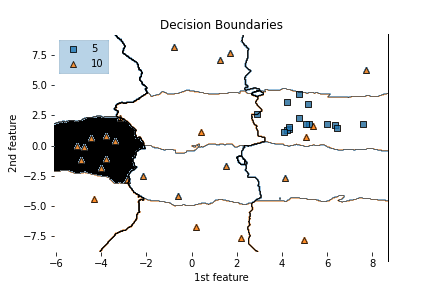}
\includegraphics[width=0.31\textwidth]{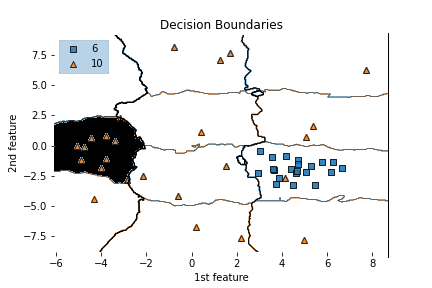}
\includegraphics[width=0.31\textwidth]{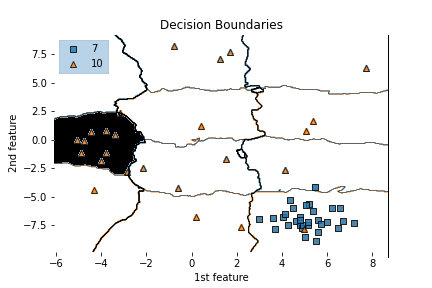}
\includegraphics[width=0.31\textwidth]{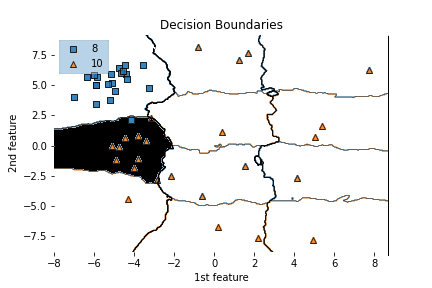}
\includegraphics[width=0.31\textwidth]{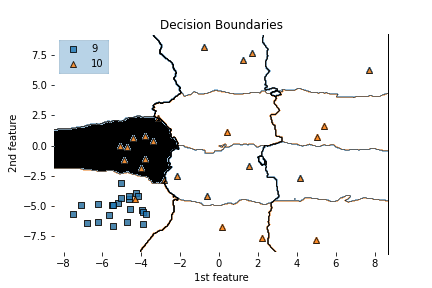}
\caption{RTSCV decision boundaries after cross-validation using KNN, fitted on the entire augmented training set consisting of 10 known classes and the sample class $X_s$. The black region represents the dummy (u.u.s) class where we intend to trap the u.u.s. Points from the sample class are represented by triangles with pseudo-label 10 while points from one of the known classes are represented by squares with the respective class label.}
\end{figure}

\newpage
\subsection{RTSCV+Decision Tree Decision Boundaries}
\begin{figure}[h!]
\centering
\includegraphics[width=0.31\textwidth]{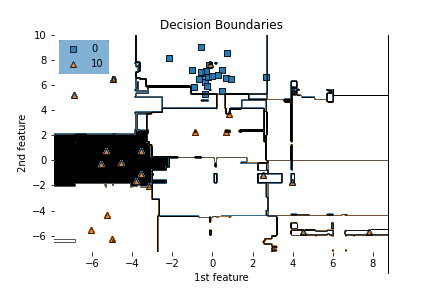}
\includegraphics[width=0.31\textwidth]{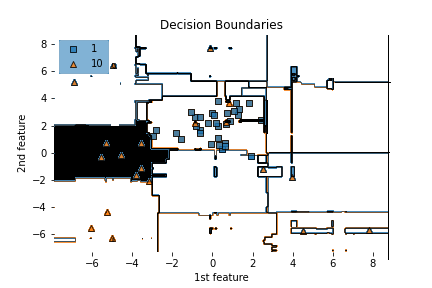}
\includegraphics[width=0.31\textwidth]{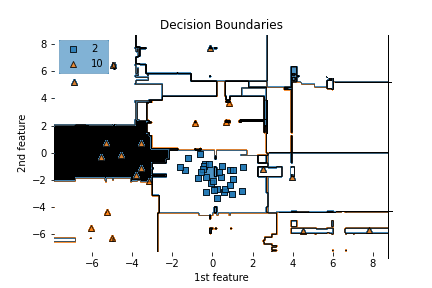}
\includegraphics[width=0.31\textwidth]{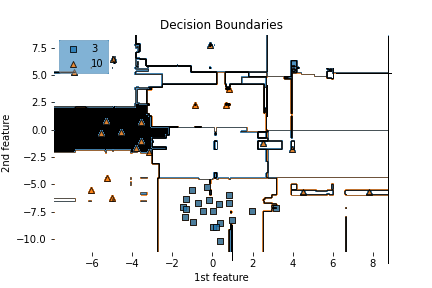}
\includegraphics[width=0.31\textwidth]{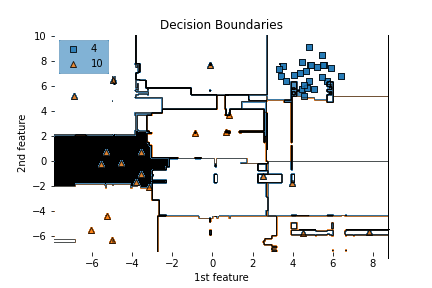}
\includegraphics[width=0.31\textwidth]{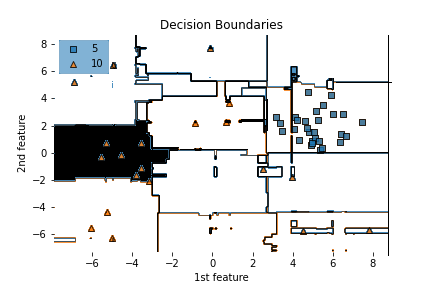}
\includegraphics[width=0.31\textwidth]{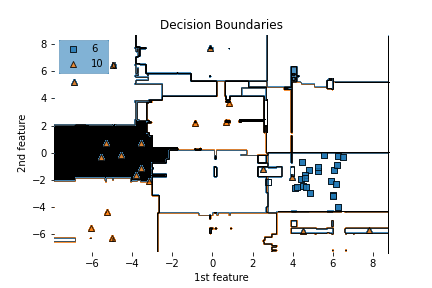}
\includegraphics[width=0.31\textwidth]{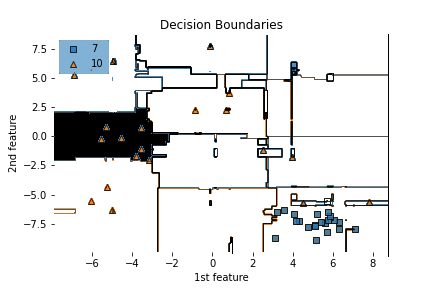}
\includegraphics[width=0.31\textwidth]{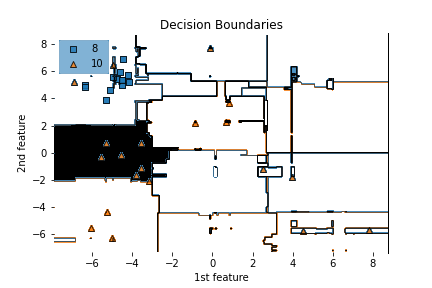}
\includegraphics[width=0.31\textwidth]{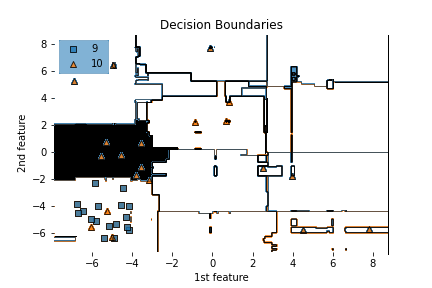}
\caption{RTSCV decision boundaries after cross-validation using Decision Tree, fitted on the entire augmented training set consisting of 10 known classes and the sample class $X_s$. The black region represents the dummy (u.u.s) class where we intend to trap the u.u.s. Points from the sample class are represented by triangles with pseudo-label 10 while points from one of the known classes are represented by squares with the respective class label.}
\end{figure}

\end{document}